
\documentclass{article}

\usepackage{microtype}
\usepackage{graphicx}
\usepackage{subcaption}
\usepackage{booktabs} 

\usepackage[pagebackref]{hyperref}




\usepackage[accepted]{icml2026}

\usepackage{amsmath}
\usepackage{amssymb}
\usepackage{mathtools}
\usepackage{amsthm}

\usepackage{wrapfig}
\usepackage{titletoc}


\usepackage[capitalize,noabbrev]{cleveref}

\usepackage{amsmath,amsfonts,bm}

















\def\1{\bm{1}}










\DeclareMathAlphabet{\mathsfit}{\encodingdefault}{\sfdefault}{m}{sl}
\SetMathAlphabet{\mathsfit}{bold}{\encodingdefault}{\sfdefault}{bx}{n}













\DeclareMathOperator{\sign}{sign}

\newcommand{\trinorm}[1]{{\left\vert\kern-0.25ex\left\vert\kern-0.25ex\left\vert #1 
   \right\vert\kern-0.25ex\right\vert\kern-0.25ex\right\vert}}

\newcommand{\EMA}{{\rm EMA}}

\newcommand{\bg}{\boldsymbol{g}}


\newcommand{\bu}{\boldsymbol{u}}
\newcommand{\bv}{\boldsymbol{v}}

\newcommand{\btheta}{\boldsymbol{\theta}}

\newcommand{\cL}{\mathcal{L}}

\newcommand{\cO}{\mathcal{O}}

\newcommand{\cQ}{\mathcal{Q}}

\newcommand{\bbE}{\mathbb{E}}

\newcommand{\bbN}{\mathbb{N}}

\newcommand{\bbP}{\mathbb{P}}

\newcommand{\bbR}{\mathbb{R}}

\newcommand{\bzero}{\mathbf{0}}

\newcommand{\pll}{\kern 0.56em/\kern -0.8em /\kern 0.56em}

\newcommand{\norm}[1]{\ensuremath{\left\| #1 \right\|}}

\newcommand{\<}{\left\langle}
\renewcommand{\>}{\right\rangle}


\theoremstyle{plain}
\newtheorem{theorem}{Theorem}[section]
\newtheorem{proposition}[theorem]{Proposition}
\newtheorem{lemma}[theorem]{Lemma}
\newtheorem{example}[theorem]{Example}

\theoremstyle{definition}

\newtheorem{assumption}[theorem]{Assumption}
\theoremstyle{remark}

\usepackage{enumitem}
\definecolor{thistle}{rgb}{0.85, 0.75, 0.85}
\definecolor{ork}{rgb}{0.89, 0.79, 0.65}
\renewcommand{\geq}{\geqslant}
\renewcommand{\leq}{\leqslant}

\newcommand{\ema}{\texttt{EMA}}
\newcommand{\RR}{\mathbb{R}}

\usepackage[textsize=tiny]{todonotes}

\icmltitlerunning{GradPower: Powering Gradients for Faster Language Model Pre-Training}

\begin{document}

\twocolumn[
  \icmltitle{GradPower: Powering Gradients for Faster Language Model Pre-Training}



\icmlsetsymbol{equal}{*}
\icmlsetsymbol{correspond}{$\dag$}

\begin{icmlauthorlist}
\icmlauthor{Jinbo Wang}{equal,sms}
\icmlauthor{Mingze Wang}{equal,correspond,sms}
\icmlauthor{Jiaqi Zhang}{correspond,mt}
\icmlauthor{Wei Wang}{mt}
\\
\icmlauthor{Peng Pei}{mt}
\icmlauthor{Xunliang Cai}{mt}
\icmlauthor{Weinan E}{sms,cmlr,aisi}
\icmlauthor{Lei Wu}{correspond,sms,cmlr,aisi}
\end{icmlauthorlist}

\icmlaffiliation{sms}{School of Mathematical Sciences, Peking University}
\icmlaffiliation{cmlr}{Center for Machine Learning Research, Peking University}
\icmlaffiliation{mt}{Meituan, Beijing}
\icmlaffiliation{aisi}{AI for Science Institute, Beijing, China}

\icmlcorrespondingauthor{Mingze Wang}{mingzewang@stu.pku.edu.cn}
\icmlcorrespondingauthor{Jiaqi Zhang}{zhangjiaqi39@meituan.com}
\icmlcorrespondingauthor{Lei Wu}{leiwu@math.pku.edu.cn}

  \icmlkeywords{Machine Learning, ICML}

  \vskip 0.3in
]



\printAffiliationsAndNotice{\icmlEqualContribution}

\begin{abstract}
We propose {\bf GradPower}, a lightweight gradient-transformation technique for accelerating language model pre-training.  Given a gradient vector $\bg=(g_i)_i$,   GradPower  first applies the  elementwise \texttt{sign-power}  transformation
$\varphi_p(\bg)=\left(\sign(g_i)|g_i|^p\right)_{i}$
for a fixed $p>0$, and then feeds the transformed gradient  into a base optimizer. Notably, GradPower requires only a {\bf single-line code change} and no modifications to the base optimizer’s internal logic, including the hyperparameters. 
When applied to AdamW (termed {\bf AdamWPower}), GradPower consistently achieves lower terminal loss across diverse architectures (LLaMA, Qwen2MoE), parameter scales (66M to 2B), datasets (C4, OpenWebText), and learning-rate schedules (cosine, warmup-stable-decay). 
The most pronounced gains are observed when training modern mixture-of-experts models with warmup-stable-decay schedules. GradPower also  integrates seamlessly with other state-of-the-art optimizers, such as Muon, yielding further improvements. Finally, we provide  theoretical analyses that reveal the underlying mechanism of GradPower and highlight the influence  of gradient noise.
\end{abstract}



\section{Introduction}\label{sec: introduction}
Large language models (LLMs) have revolutionized modern artificial intelligence~\citep{brown2020language,achiam2023gpt,liu2024deepseek}.
However, pre-training LLMs is computationally intensive due to the massive scale of model size and training data. Improving the pre-training efficiency has thus become a primary objective in the continued  scaling of LLMs. Among the factors affecting efficiency, the choice of optimizer is critical. In practice, the Adam optimizer~\citep{kingma2014adam,loshchilov2017decoupled} has emerged as the de facto choice in most LLM pre-training pipelines, owing to its adaptive learning rate features~\citep{zhang2024transformers,kunstner2024heavy}.

To further accelerate AdamW, several approaches have been proposed to refine or simplify its moment estimation~\citep{xie2022adan,pagliardini2024ademamix,chen2024symbolic,liu2025focus,zhang2024adam}.  Other strategies  modify the update rule directly, such as  direction correction~\citep{wang2024improving,liang2024cautious}, incorporating curvature information~\citep{liu2023sophia,wang2025sharpness}, or applying matrix-based preconditioning~\citep{jordan2024muon,liu2025muon}.  
While these methods often deliver tangible gains, they typically require substantial modifications to the existing training pipeline  and careful extra hyperparameter tuning, which hinders their practical adoption.

\begin{figure*}[!htb]
    \centering
    \includegraphics[width=0.28\linewidth]{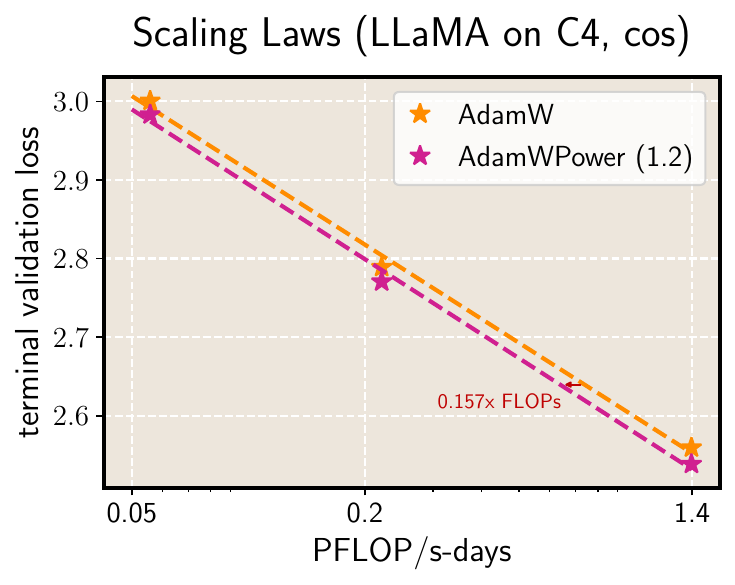}
    \includegraphics[width=0.28\linewidth]{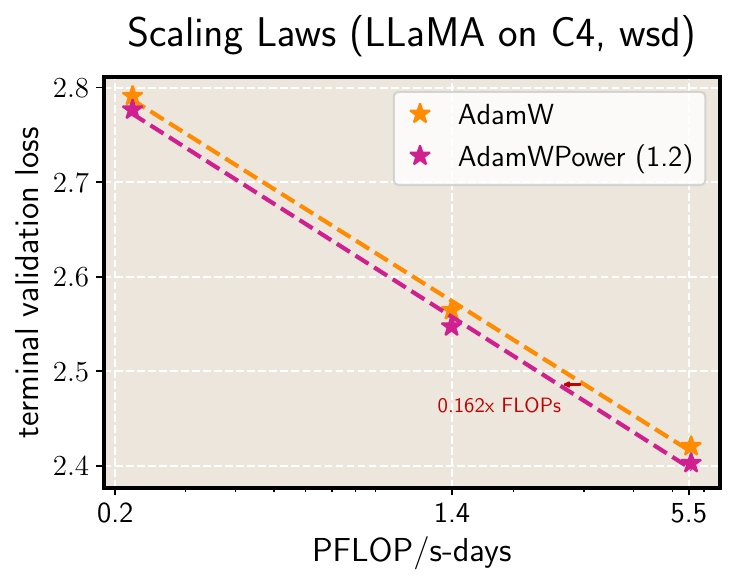}
    \includegraphics[width=0.28\linewidth]{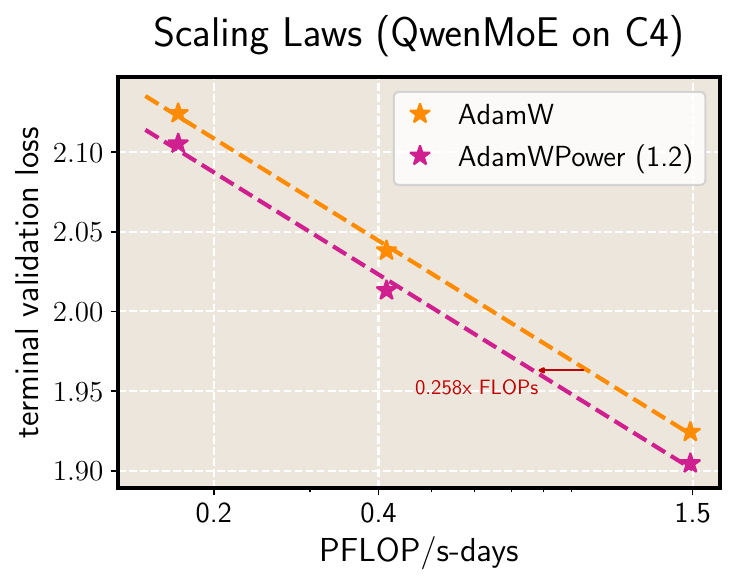}
\caption{\small
Scaling-law comparison of AdamWPower and AdamW on the C4 dataset for dense LLaMA models and mixture-of-experts Qwen2MoE models.}
    \label{fig: scaling law on c4}
    \vspace*{-.5em}
\end{figure*}

In contrast to these intrusive modifications, we instead propose a lightweight, plug-in approach by revisiting the core component of optimization: the {\em gradient itself}.  We apply a simple elementwise transformation to the gradient vector -- enhancing its informativeness while leaving the base optimizer entirely unchanged. This design preserves compatibility with existing training pipelines and avoids additional tuning burden. Specifically, {\bf our contributions} are  as follows:

\begin{itemize}[leftmargin=1em]
    \item \textbf{Our approach.}  
    We propose GradPower, a simple but effective approach for boosting the convergence of general gradient-based optimizers.
    Specifically, given a raw gradient  $\bg=(g_i)_{i}\in\mathbb{R}^d$ and a fixed $p>0$, we define the {\bf powered  gradient} as
    \begin{equation}
    \label{eqn: grad-power}
    \begin{split}
        &\varphi_p(\bg) := |\bg|^p \sign(\bg) \\=&
        \left( |g_1|^p\sign(g_1)
        ,\dots, |g_d|^p\sign(g_d) \right)^\top.
    \end{split}
    \end{equation}
    GradPower applies this powered gradient to the base optimizer, preserving its original structure.

\item \textbf{Empirical performance.} We first evaluate the effectiveness of GradPower by integrating it into AdamW, termed \textbf{AdamWPower}. We test its performance across a broad LLM pre-training landscape: {\bf dense} models (LLaMA~\citep{touvron2023llama}) and {\bf mixture-of-experts} models (Qwen2MoE~\citep{yang2024qwen2technicalreport}), ranging from {\bf 66M} to {\bf 2B} parameters, using the C4 and OpenWebText corpora, and under both cosine-decay (\texttt{cos}) and warmup-stable-decay (\texttt{wsd}) learning-rate schedules.
Across all settings, AdamWPower consistently achieves lower terminal loss and exhibits more favorable scaling laws compared to vanilla AdamW (see Figure~\ref{fig: scaling law on c4}), demonstrating its potential for improved scalability to larger models. Notably, the performance gains are most significant for modern MoE architectures and \texttt{wsd} schedules.

Furthermore, we show that GradPower can be also seamlessly integrated with other state-of-the-art optimizers, such as Muon~\citep{jordan2024muon, liu2025muon} and Blockwise LR~\citep{wang2025sharpness}, yielding additional performance improvements.

    \item \textbf{Theoretical analysis.} 
1) Recent analyses suggest that steady progress along flat ``river-like'' directions is crucial for reducing loss in LLM pre-training~\citep{wang2024improving,wen2024understanding}. We show that AdamWPower amplifies these directions, thereby accelerating optimization.
2) Moreover, for general \emph{smooth non-convex objectives}, we prove that augmenting adaptive optimizers (e.g., AdaGrad) with GradPower strictly accelerates their convergence in both low- and high-noise regimes, supporting the intuitions developed in Section~\ref{sec: motivation}.
\end{itemize}


{\bf Notations.}
For  $\{a_s\}_{s=1}^{\infty}$, its $\beta$-exponential moving average at time $t$ is denoted as $\ema_\beta(\{a_s\}_{1}^t):=(\sum_{s=1}^t\beta^{t-s}a_s)/(\sum_{s=1}^t\beta^{t-s})$.
For $g\in\bbR$ and $p\in\bbR_+$, we denote $g^p:=|g|^p\sign(g)$; for a vector $\bg$, the notation $\bg^p$ denotes element-wise application of this transformation.
For simplicity, we use {\em a.s.} to denote ``almost surely'', and use {\em w.r.t.} to denote ``with respect to''.
We use standard big-O notations $\cO(\cdot),\Omega(\cdot),\Theta(\cdot)$ to hide problem-independent constants, and use $o(\cdot)$ to denote the infinitesimal.
Let $\norm{\cdot}_q$ denote the $\ell_q$ norm for vectors for a $q\geq 1$.
We denote $[n]=\{1,\cdots,n\}$ for an integer $n\in\bbN_+$.



\section{Related Works}\label{sec: related works}

{\bf Optimizer design in LLM pre-training.} 
In LLM pre-training, Adam~\citep{kingma2014adam} has become the de facto optimizer. 
Recent efforts to improve its efficiency focus on two aspects: accelerating convergence and reducing memory usage. Techniques for
{\em accelerating convergence}  include introducing curvature information~\citep{liu2023sophia,wang2024improving,wang2025sharpness}, mixing momentum~\citep{xie2022adan,pagliardini2024ademamix}, variance reduction~\citep{yuan2024mars}, cautious update~\citep{liang2024cautious}, and applying matrix-based preconditioners~\citep{jordan2024muon,vyas2024soap}.  
{\em Memory-efficient}  techniques  include reducing the moments usage in Adam~\citep{zhang2024adam},  sign-based updates~\citep{chen2024symbolic,liu2025focus}, low precision optimizer states~\citep{dettmers20218,li2023memory}, low-rank approximation~\citep{hu2022lora, zhao2024galore,chen2024fira}, and structured learning rates~\citep{zhu2024apollo}.
Among these, Muon~\citep{jordan2024muon} stands out for  improving both convergence  and memory usage and showing strong scalability~\citep{liu2025muon}.
{\em In contrast}, our  method, GradPower, improves training efficiency without altering the base optimizer’s internal updates. Notably, GradPower is orthogonal and complementary to the methods above: it can serve as a lightweight plug-in that further enhances existing optimizers. 

{\bf The fast-slow dynamics} in neural network training.
Recent works~\citep{wu2018sgd,Jastrzebski2020The,cohen2021gradient,cohen2022adaptive} show that neural network training typically occurs at the so-called Edge of Stability (EoS) stage. This regime is characterized by the optimizer exhibiting oscillatory behavior along sharp directions without divergence, while steadily progressing along {\em flat directions}, leading to loss reduction. Several studies \citep{wen2024understanding,song2024does,cohen2024understanding,wang2024improving,wang2025sharpness,zhu2026accelerating} have emphasized the importance of the slow dynamics along flat directions (referred to as stable direction in \citet{wang2024improving}, river directions by \citet{wen2024understanding} and bulk directions by \citet{song2024does}), in {\em reducing total loss}. Moreover, ~\citet{wen2024understanding} further showed that this picture is crucial for understanding the behavior of LLM pre-training.
In addition, Fig.3 in~\citep{wen2024understanding} and Fig.8 \citep{song2024does} suggest that, the optimizer’s trajectory within flat directions tends to {\em remain aligned} for a period of time.

{\bf The terminology of flat directions.} In classical optimization theory, flat directions refer to Hessian eigenvectors associated with small eigenvalues. However, {\em our flat directions correspond to small stochastic gradient directions}. 
Our usage of flat directions is approximate and follows two lines of prior works:
{\em (i) The anisotropy of gradient noise closely reflects the Hessian’s curvature structure.}
A line of works~\citep{zhu2019anisotropic,wu2020noisy,mori2022power,wu2022does,wang2023theoretical} have established that {\em directions with small Hessian curvature exhibit low gradient-noise variance}, while directions with large Hessian curvature exhibit high gradient-noise variance. 
{\em (ii) LLM pre-training typically operates in a noise-dominated regime, where the stochastic gradient is largely governed by the gradient noise}. The pretraining dataset is massive, and each mini-batch constitutes only a tiny fraction of it. As a result, the stochastic gradient is dominated by sampling noise. Therefore, relative differences across coordinates in stochastic gradient $g$ primarily arise from differences in the noise $\xi$, rather than differences in the true gradient $\nabla\cL$ (Note: $\xi=g-\nabla\cL$). Moreover, prior empirical studies in deep-learning settings show that as training progresses, the gradient noise constitutes the dominant component of the stochastic gradient, and the dynamics occurs in the noise-dominated regime~\citep{mccandlish2018empirical,sun2023unleashing}. For LLMs, direct empirical verification is challenging because computing the full-batch gradient is infeasible, but the above intuition suggests that LLM training should also lie in this noise-dominated regime.
{\em Therefore}, our flat directions approximately correspond to small stochastic gradient directions.

\section{The GradPower Approach}\label{sec: motivation}

Let $\bg_t \in \RR^d$ denote the stochastic gradient at step $t$. A gradient-based optimizer can be expressed as
$\btheta_{t+1} = \btheta_t - \eta_t \cQ(\bg_1, \cdots, \bg_t)
$,
where $\eta_t$ is learning rate and $\cQ$ denotes update rule. 


\paragraph{A unified view of preconditioning.}
 In practice, raw gradients may not be sufficiently informative or stable, and thus, it is common to transform the gradients before applying the update rule. This leads to the general form of {\it preconditioned optimizers}:
\begin{equation}\label{eqn: pre-gradient-optimizer}
\btheta_{t+1} = \btheta_t - \eta_t \cQ\left(\varphi(\bg_1), \cdots, \varphi(\bg_t)\right)
\end{equation}
where $\varphi$ denotes a transformation (or preconditioning) applied to each gradient.


To {\em avoid computational overhead}, we restrict attention to {\em elementwise} transformations. For a gradient vector $\bg=(g_i)_i\in\bbR^d$, we consider
$\varphi(\bg) := (\varphi(g_1), \dots, \varphi(g_d))^\top \in \RR^d$,
where $\varphi: \RR \to \RR$ is a scalar nonlinearity applied coordinate-wise. 
The function $\varphi$ is designed to enhance the informativeness of the raw gradient.
The simplest choice is a linear transformation $\varphi(z)=cz$ with $c\in\bbR$.
However, as shown in Appendix~\ref{appendix: proof of case study}, such linear preconditioners may exhibit limited effectiveness when used in modern optimizers for LLM pre-training.
In this work, we explore {\bf nonlinear preconditioning} as an alternative. 


\paragraph{The GradPower family.}
Empirically, we find that a simple power transformation already yields non-trivial improvements in LLM pre-training. Specifically, we define the \texttt{sign-power} transformation $\varphi_p:\bbR\to\bbR$ with exponent $p > 0$ as
\vspace*{-.3em}
\begin{align*}
    \varphi_p(z)=|z|^p{\rm sign}(z).
\end{align*}
The powered gradient is shown in Equation~\eqref{eqn: grad-power}. 
Incorporating this transformation into AdamW yields a new optimizer, {\bf AdamWPower}, described in Algorithm~\ref{alg: adamwpower}. Remarkably, AdamWPower 
{\colorbox{orange!20}{\!introduces only one additional line of code\!}} 
compared to standard AdamW.

In the following sections, we first present empirical evidence demonstrating the effectiveness of AdamWPower, followed by a theoretical analysis that sheds light on its underlying mechanisms.

\begin{algorithm}[!ht]
	\caption{{\bf AdamWPower}}
	\label{alg: adamwpower}


    
        
        

        

        
        
    \begin{algorithmic}[1]
    \STATE \textbf{Given} hyperparameters $\beta_1,\beta_2,\epsilon,\lambda$; Learning rates $\{\eta_t\}_{t=1}^T$;  \colorbox{orange!20}{\!\!power $p\in\bbR_+$\!\!};
    \STATE \textbf{Initialize} $\btheta_0\in\bbR^d$, first momentum $\boldsymbol{m}_0\leftarrow\bzero$, second momentum $\bv_0\leftarrow\bzero$;
    \FOR{$t=1,\ldots,T$}
        \STATE Compute mini-batch gradient $\bg_t$;
        \STATE \colorbox{orange!20}{\!\textbf{GradPower}: compute powered gradient\!}\\
        \colorbox{orange!20}{\!$\bg_t\leftarrow|\bg_t|^p\sign(\bg_t)$ using Eq.~\eqref{eqn: grad-power};\!}
        \STATE $\boldsymbol{m}_t \leftarrow \beta_1\boldsymbol{m}_{t-1}+(1-\beta_1)\bg_t$;
        \STATE $\hat{\boldsymbol{m}}_t\leftarrow \boldsymbol{m}_t/(1-\beta_1^t)$;
        \STATE $\bv_t \leftarrow \beta_2\boldsymbol{v}_{t-1}+(1-\beta_2)\bg_t^2$;
        \STATE $\hat{\boldsymbol{v}}_t\leftarrow \boldsymbol{v}_t/(1-\beta_2^t)$;
        \STATE $\btheta_{t}\leftarrow\btheta_{t-1}-\eta_t\Big(\hat{\boldsymbol{m}}_t/\left(\sqrt{\hat{\bv}_t}+\epsilon\right)+\lambda\btheta_{t-1}\Big)$;
    \ENDFOR
    \STATE \textbf{Output:} Optimized parameters $\btheta_T$.
    \end{algorithmic}
\end{algorithm}



\section{Empirical Evaluation}\label{sec: experiments}


\subsection{Experimental Setup}


We evaluate AdamWPower for the task of LLM pre-training across a range of  model architectures, parameter scales, datasets, and learning rate (LR) schedulers.
The main experimental configurations are summarized below, while additional implementation details are provided in Appendix~\ref{appendix: experiments}.


\begin{itemize}[leftmargin=1em]
    \item {\bf Models.} 
    We consider two widely used LLM architectures: {\bf LLaMA} (dense) models~\citep{touvron2023llama} and {\bf Qwen2MoE} (MoE) models~\citep{yang2024qwen2technicalreport}. We experiment with model sizes ranging {\bf from 66M to 2B} parameters.

    
    \item {\bf Datasets.} 
    We evaluate our methods on the {\bf Colossal Clean Crawled Corpus (C4)} dataset~\citep{raffel2020exploring}\footnote{A large-scale public language dataset, widely used for LLM pre-training such as T5~\citep{raffel2020exploring}, and prior pre-training studies~\citep{zhao2024galore,zhao2024deconstructing}.} and {\bf OpenWebText} dataset~\citep{Gokaslan2019OpenWeb}\footnote{An open-source recreation of the WebText corpus, commonly used in pre-training models such as RoBERTa~\citep{liu2019roberta}, GPT-2, and NanoGPT~\citep{Karpathy2022}.}.
    For pre-training on  C4,  we follow the setup of \citet{wortsman2023small,zhao2024deconstructing}, using a 
     batch size of 512. 
  The total number of training tokens is set to be approximately 20 times the number of model parameters, in accordance with the Chinchilla scaling law~\citep{hoffmann2022training}.

    \item {\bf LR schedulers.}
    We evaluate two popular LR scheduling strategies: (i) \texttt{cos}: a linear warm-up to peak \texttt{lr\_max}, followed by cosine decay to a terminal LR \texttt{lr\_min}.
    (ii) \texttt{wsd} (warmup-stable-decay scheduler)~\citep{zhai2022scaling,hu2024minicpm,hagele2024scaling}:  a linear warm-up LR to peak \texttt{lr\_max}, followed by a stable phase where LR remains at \texttt{lr\_max} (up to 80\% of the total training steps), and then a linear decay to \texttt{lr\_min}. It should be noticed that \texttt{wsd} introduces a non-traditional loss curve: slowly decrease during the stable phase and suddenly drop during the final decay phase. 
\end{itemize}
We further evaluate our method on vision tasks, and report detailed implementation settings in Appendix~\ref{appendix: experiments}.



\begin{wrapfigure}{r}{0.22\textwidth}
    \vspace{-.3cm}
    \centering
    \includegraphics[width=0.21\textwidth]{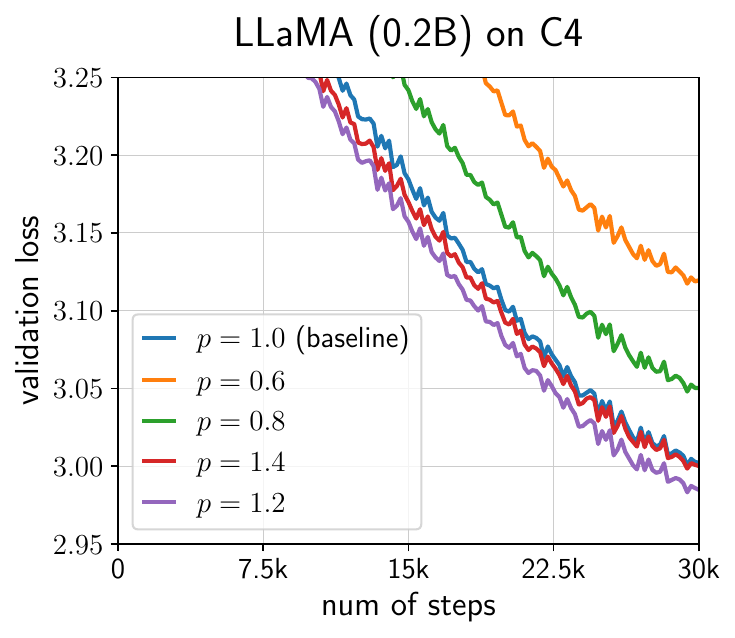}
    \vspace{-.1cm}
    \caption{\small Pre-training LLaMA (0.2B) on C4 using AdamWPower with different power $p$'s. The optimal power is $1.2$.}
    \label{fig: optimal p, 0.2B on C4}
    \vspace{-.2cm}
\end{wrapfigure}
\paragraph{AdamW baselines.} 
We use the standard Adam optimizer (with decoupled weight decay) as the baseline in most experiments (except Section~\ref{subsec: blockwise lr and muon}). 
The baseline is configured with hyperparameters $\beta_1=0.9$, $\beta_2=0.95$, weight decay $\lambda=0.1$, and gradient clipping threshold of $1.0$, following protocols used in LLaMA pre-training~\citep{touvron2023llama}.
For each experiment, we first tune \texttt{lr\_max} over $\{\texttt{1e-4, 2e-4, 3e-4, 6e-4, 1e-3, 1.5e-3}\}$ to be optimal for AdamW, and the baselines are trained using these optimal \texttt{lr\_max}'s.
Details can be found in Appendix~\ref{appendix: experiments}.

\textbf{The tuning of power $p$ and its transferability.}
We only tune the power $p$ in a single small-scale experiment: pre-training LLaMA (0.2B) on C4. As shown in Figure~\ref{fig: optimal p, 0.2B on C4}, the tuned power is $1.2$.
Interestingly, this aligns with the optimal power observed in the high-noise regime of our illustrative toy example (Figure~\ref{fig: toy example, optimal p}). 
We adopt $p=1.2$ as the default in most experiments (except Section~\ref{subsec: experiment: batch size}).
Importantly, the power proves to be  {\bf highly robust}: AdamWPower with $p=1.2$ {\bf consistently outperforms} AdamW and exhibits better scaling laws, across model types, model sizes, datasets, and LR schedulers.

\textbf{Lightweight wall-clock time overhead.} We empirically measured wall-clock time using LLaMA (0.25B) on OpenWebText with batch size 480 and observed that AdamWPower introduces only a 0.4\% overhead per step compared to AdamW, with 0.7565s versus 0.7534s, almost negligible in practice, confirming its lightweight nature.

\subsection{Results on Dense Models}
\label{subsec: dense results}


\begin{figure*}[!htb]
    \centering
    \includegraphics[width=0.24\linewidth]{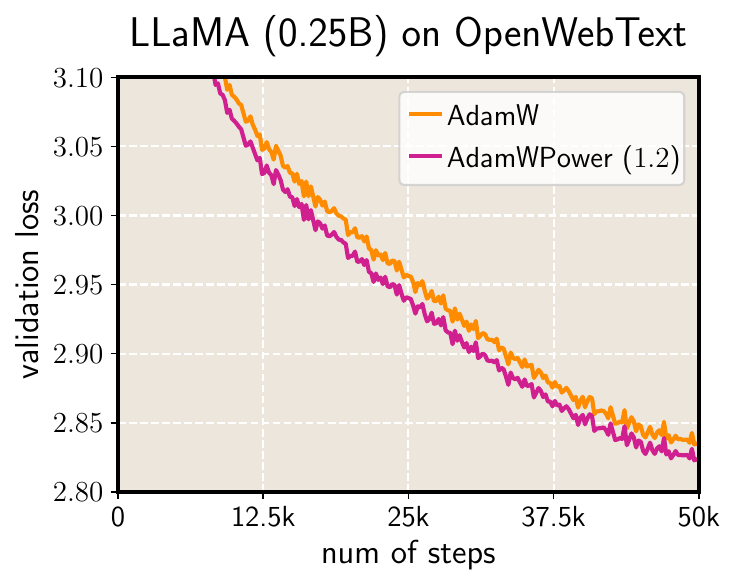}
    \includegraphics[width=0.24\linewidth]{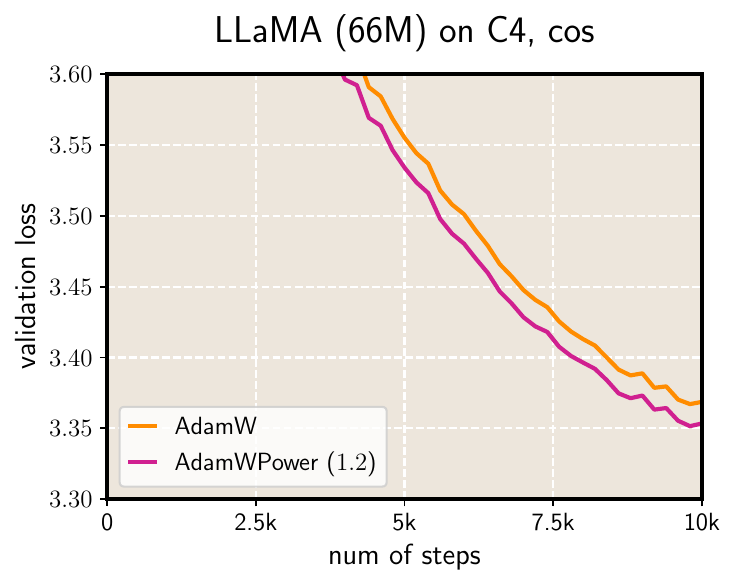}
    \includegraphics[width=0.24\linewidth]{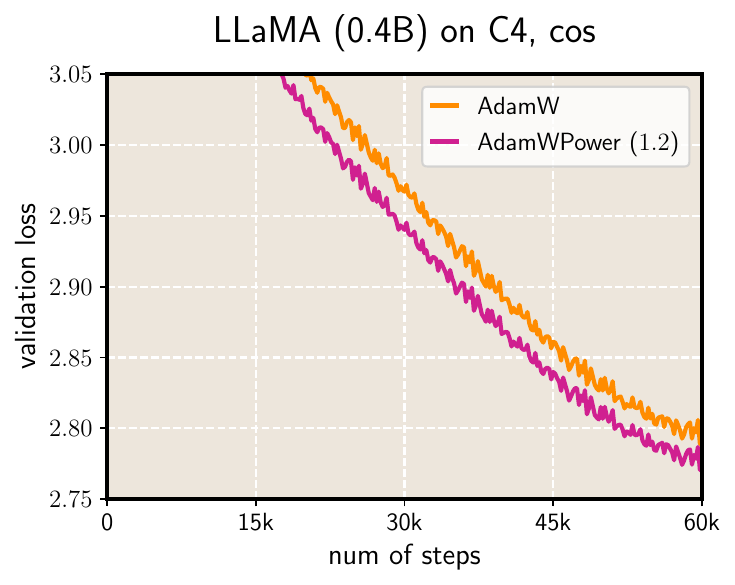}
    \includegraphics[width=0.24\linewidth]{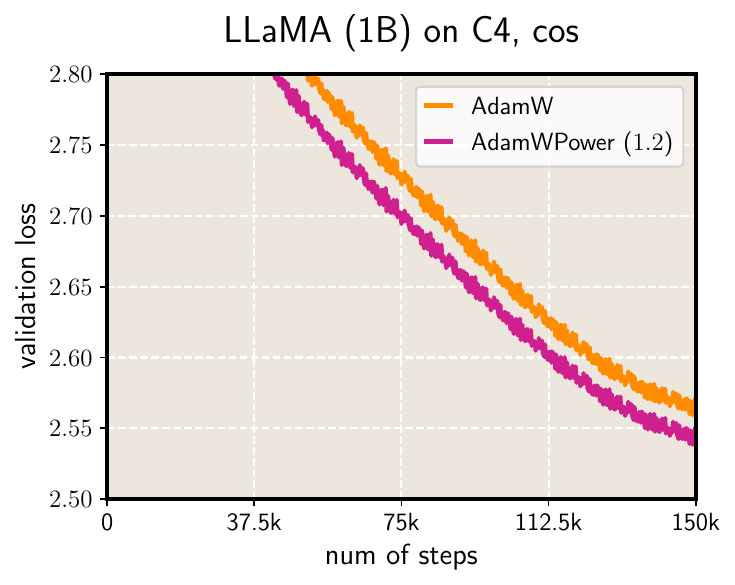}
    \includegraphics[width=0.24\linewidth]{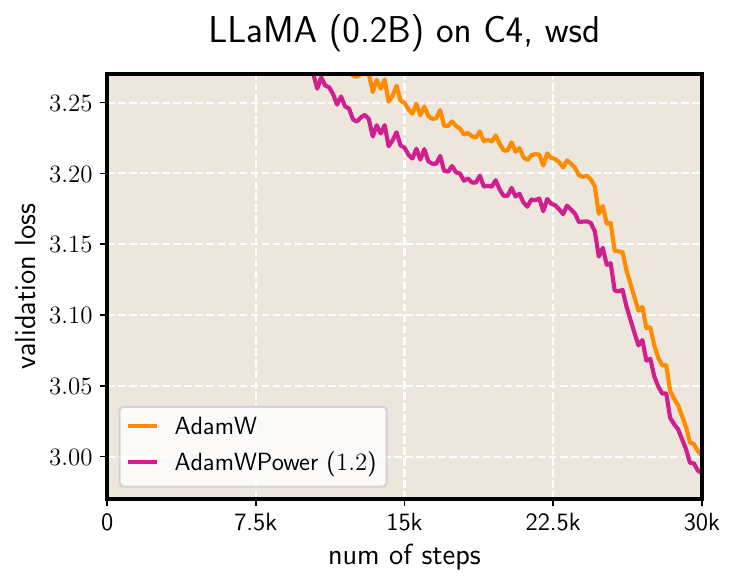}
    \includegraphics[width=0.24\linewidth]{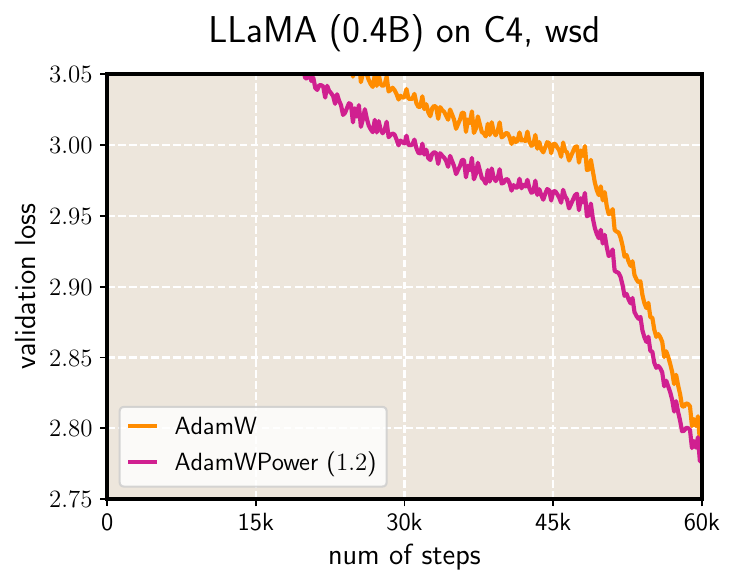}
    \includegraphics[width=0.24\linewidth]{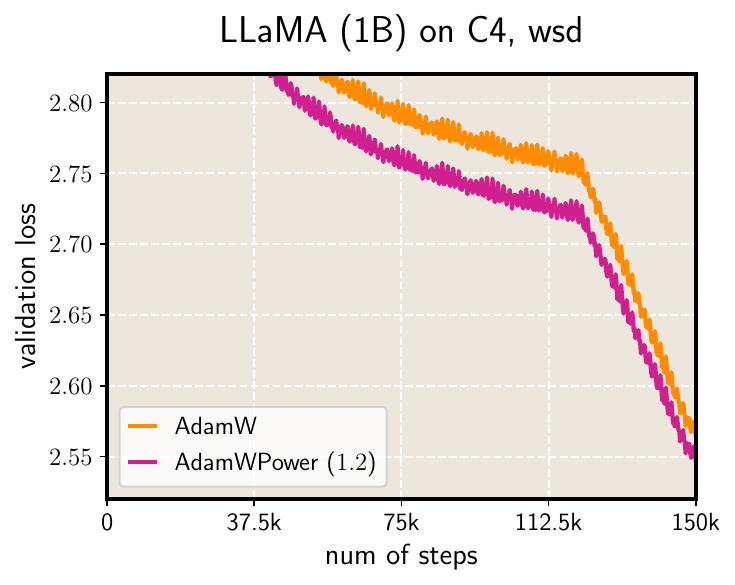}
    \includegraphics[width=0.24\linewidth]{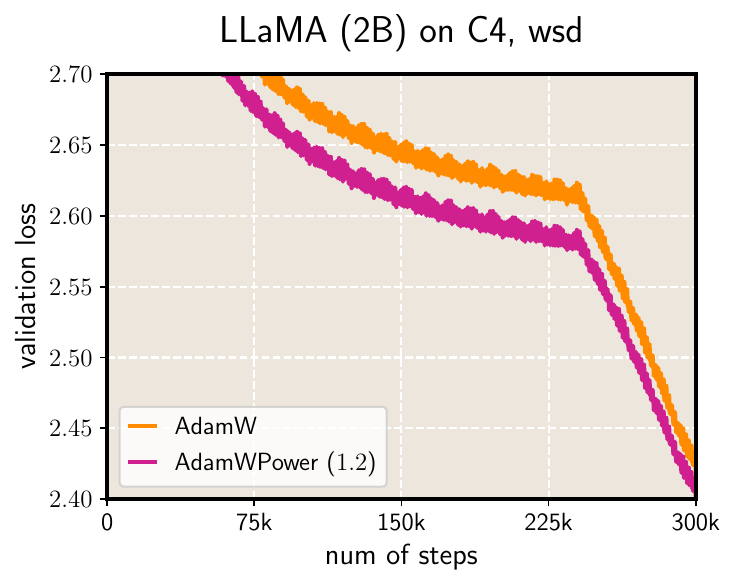}

    \vspace{-.2cm}
    
    \caption{AdamWPower ($p=1.2$) consistently outperforms AdamW in LLaMA pre-training tasks across a range of model sizes, datasets and LR schedulers.}
    \label{fig: llama on c4, full}
     \vspace{-.2cm}
\end{figure*}

\begin{table*}[!htb]
    \centering
    \caption{\small The evaluation results of LLaMA (2B) models pre-trained using the C4 dataset. The best scores in each column are bolded.}
    \small
    \begin{tabular}{l|c|c|c|c|c|c|c}
    \hline\hline
        \textsc{Method} & \textsc{ARC-E} & \textsc{ARC-C} & \textsc{PIQA} & \textsc{HellaSwag} & \textsc{OBQA} & \textsc{WinoGrande} & \textsc{Avg.} \\ \hline
        AdamW & 60.02 & \textbf{26.45} & 73.56 & 44.65 & 24.80 & 56.83 & 47.72 \\ 
        AdamWPower (1.2) & \textbf{60.35} & 26.28 & \textbf{73.61} & \textbf{44.93} & \textbf{25.00} & \textbf{59.43} & \textbf{48.26} \\ \hline\hline
    \end{tabular}

    \label{tab: downstream}

    \vspace{-.2cm}
    
\end{table*}


{\bf Main findings.} Figure~\ref{fig: llama on c4, full} compares the performance of AdamWPower (with $p=1.2$) to that of vanilla AdamW across  a range of settings, including LLaMA models of size 66M, 0.2B, 0.4B, 1B and 2B; both \texttt{cos} and \texttt{wsd} LR schedulers; and the C4 and OpenWebText datasets.
Across all experiments, AdamWPower {\bf consistently achieves a lower
terminal loss} than well-tuned AdamW baseline.
To further assess its scalability, we visualize the {\bf scaling laws} of AdamWPower versus AdamW in Figure~\ref{fig: scaling law on c4} (left and middle).
We observe that the performance gain of AdamWPower over AdamW remains consistent across a wide range of model scales, {\bf highlighting the potential scalability of AdamWPower}.

{\bf Evaluation on downstream tasks.} Additionally, we also evaluate zero-shot performances of our method on common benchmarks including ARC~\citep{yadav2019quick}, PIQA~\citep{bisk2020piqa}, HellaSwag~\citep{zellers2019hellaswag}, OBQA~\citep{mihaylov2018can}, WinoGrande~\citep{sakaguchi2021winogrande}, using the lm-evaluation-harness codebase~\citep{eval-harness}. The results are reported in Table~\ref{tab: downstream}. The model pre-trained with AdamWPower outperforms that trained with AdamW on five out of six tasks, as well as on the overall average score, demonstrating improved downstream performance under the same number of pre-training steps.

\subsection{Results on MoE Models}
\label{subsec: moe results}


Mixture-of-experts (MoE) architectures have become a key design choice in  modern LLMs, as exemplified by Qwen-2.5~\citep{yang2024qwen2} and DeepSeek-V3~\citep{liu2024deepseek}.  Compared to dense models, MoE models often exhibit greater training instability. To assess whether the benefits of AdamWPower extend to MoE models, we conduct experiments on Qwen2MoE~\citep{yang2024qwen2technicalreport}.


{\bf Main findings.}
Figure~\ref{fig: moe on c4, full} compares the performance of AdamWPower ($p=1.2$) and standard AdamW for pre-training Qwen2MoE models of sizes 0.5B, 1B, and 2B on the C4 dataset, using the \texttt{wsd} scheduler.
Across all settings, AdamWPower \textbf{consistently achieves a lower terminal loss} than the well-tuned AdamW baseline.
To further examine scaling behavior, Figure~\ref{fig: scaling law on c4} (right) visualizes the \textbf{scaling laws} of AdamWPower versus AdamW during Qwen2MoE pre-training.
The performance gap between the two optimizers remains stable across model scales, with the corresponding scaling curves remaining nearly parallel -- \textit{suggesting that the gains offered by AdamWPower may persist  at larger model scales}.

\begin{figure*}[!htb]
    \centering
    \includegraphics[width=0.24\linewidth]{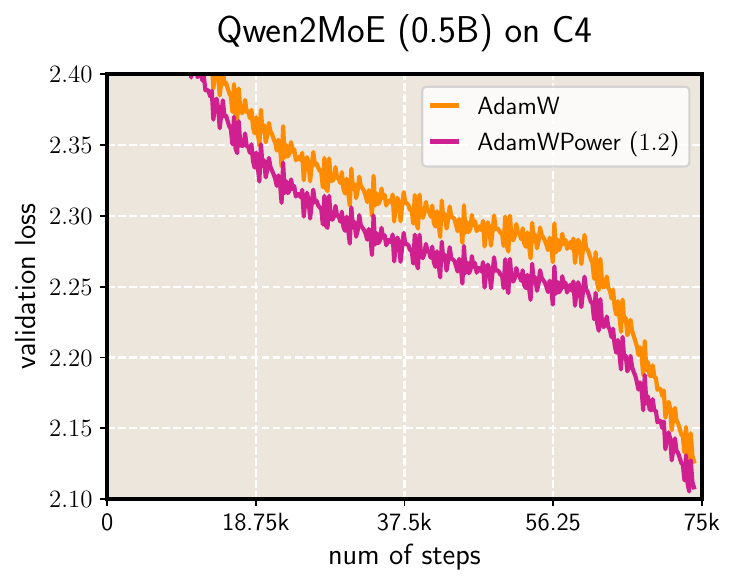}
    \includegraphics[width=0.24\linewidth]{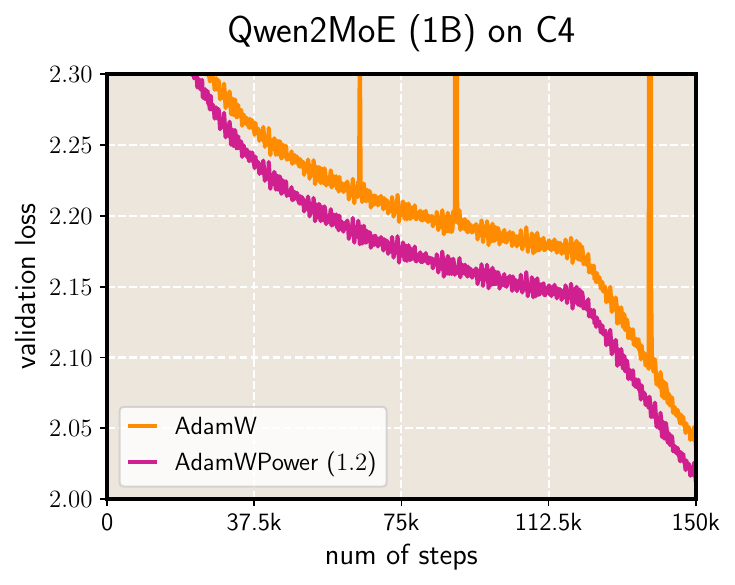}
    \includegraphics[width=0.24\linewidth]{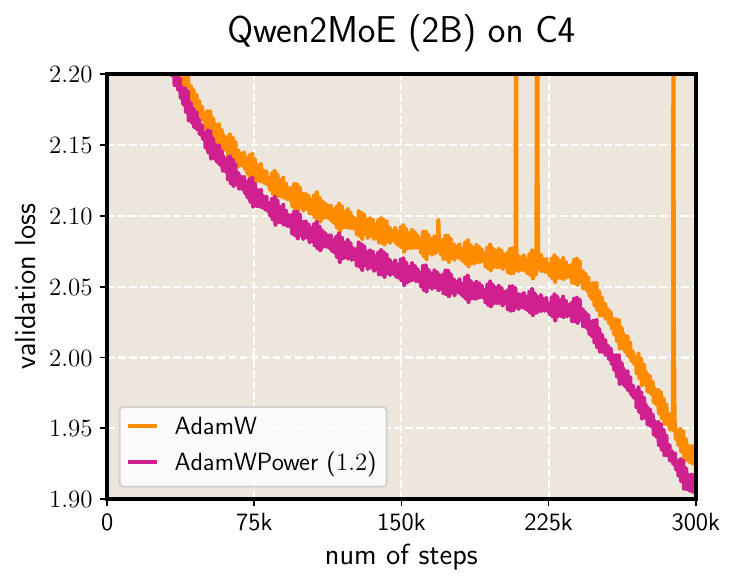}

    
    \caption{AdamWPower ($p=1.2$) consistently outperforms AdamW in Qwen2MoE pre-training tasks on C4, across varying model sizes. The learning rate schedule is \texttt{wsd}.}
    \label{fig: moe on c4, full}

    \vspace{-.3cm}
    
\end{figure*}

\begin{figure*}[!htbp]
    \centering
    \includegraphics[width=0.24\linewidth]{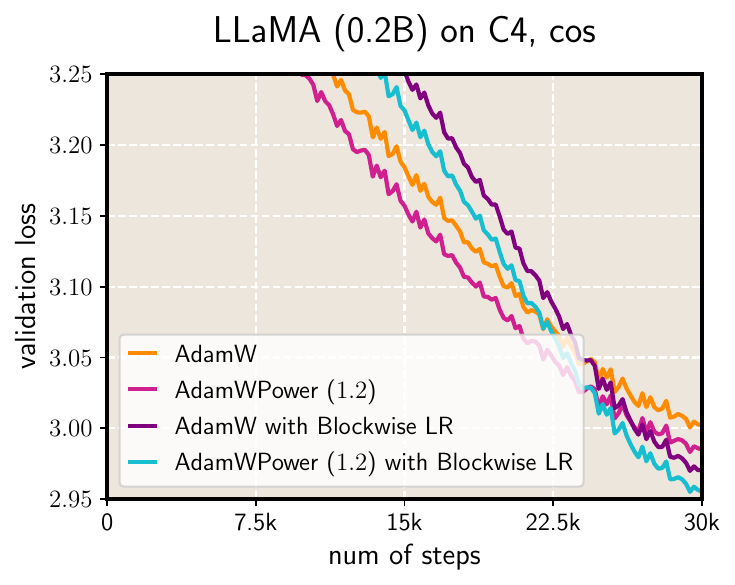}
    \includegraphics[width=0.24\linewidth]{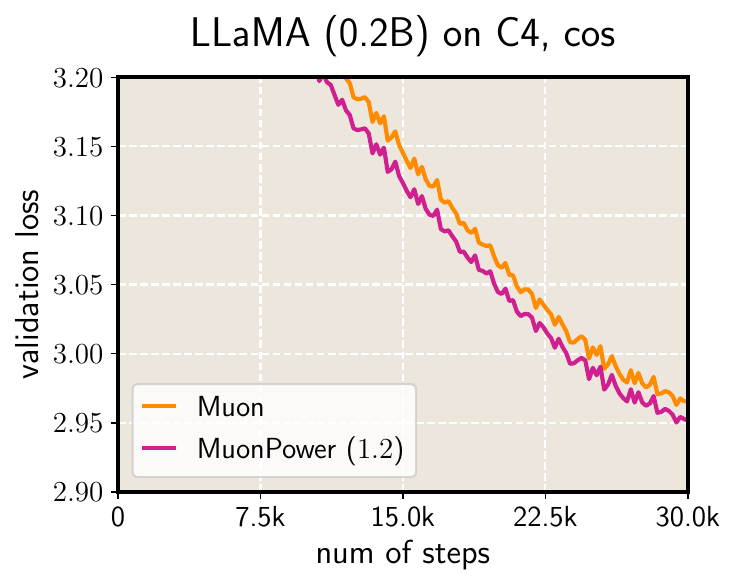}
    \includegraphics[width=0.24\linewidth]{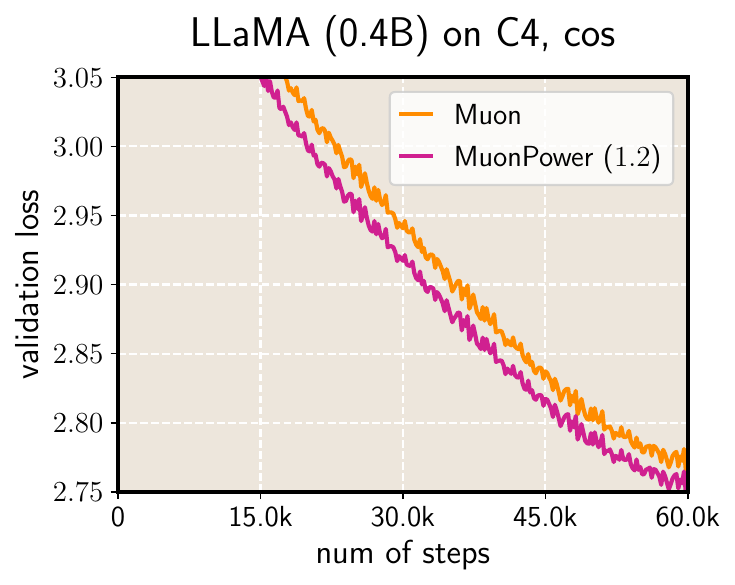}
    \caption{(left) AdamWPower with Blockwise LR outperforms both AdamWPower and AdamW with Blockwise LR in LLaMA pre-training. (middle, right) MuonPower (with $p=1.2$) outperforms Muon in LLaMA pre-training.}
    \label{fig: blockwise lr and muon}
    \vspace{-.2cm}
\end{figure*}

{\bf Special potential for MoE models.}
Additionally, we observe three surprising phenomena, suggesting that AdamWPower may offer unique advantages for MoE model training:

\vspace{-.1cm}

\begin{itemize}[leftmargin=1em]
    \item 
    Although the power $p=1.2$ was originally tuned for LLaMA, it generalizes well to Qwen2MoE models without further tuning. (it is likely that an even better $p$ exists for MoE-specific training.)
    Remarkably,  the absolute improvement achieved by AdamWPower on Qwen2MoE-2B ($0.028$) is {\bf more significant} than that on LLaMA-2B ($0.022$). Notably, Qwen2MoE-2B reaches a much lower loss ($1.93$) compared to LLaMA-2B ($2.43$), making further improvements more challenging -- yet AdamWPower still yields remarkable gains.
    \item 
    AdamWPower also exhibits improved {\bf training stability}, reducing the occurrence of loss spikes seen with AdamW. This effect is particularly visible in the 1B and 2B curves in Figure~\ref{fig: moe on c4, full} (middle, right).
    Based on recent understanding in Section~\ref{sec: related works}, the fast vibrations along the sharp (valley) directions mainly decide the training (in)stability.
    We {\em hypothesize} that the gradient power transformation in AdamWPower may help suppress the vibrations along these directions. We leave a detailed investigation of this phenomenon to future work.
    \item The \texttt{wsd} scheduler has become increasingly popular in recent LLM pre-training~\citep{liu2024deepseek,hagele2024scaling}, always taking a long stable phase.
We observe that the advantage of AdamWPower {\bf gradually increases} {\em throughout the LR stable phase}. This suggests that AdamWPower may be particularly suited for modern training pipelines that adopt \texttt{wsd} schedules.
\end{itemize}

\vspace*{-.2cm}

\subsection{Compatibility with Other Optimizers}\label{subsec: blockwise lr and muon}

As discussed in Section~\ref{sec: related works}, several optimizers have recently been proposed to enhance LLM pre-training. 
While AdamWPower has demonstrated superiority over AdamW in both dense and MoE models, we now ask: {\em can GradPower  also improve the performance of other state-of-the-art optimizers?}

To investigate this, we focus on two representative optimizers: AdamW with {\bf Blockwise LR}~\citep{wang2025sharpness} and {\bf Muon optimizer}~\citep{jordan2024muon,liu2025muon}.
Blockwise LR assigns separate learning rates to different Transformer blocks and has shown substantial improvements over standard AdamW.
Muon, on the other hand, breaks away from the AdamW framework entirely and has recently been shown to achieve better scaling laws than AdamW~\citep{liu2025muon}. 
We refer to the application of GradPower to Muon as {\bf MuonPower}.

The results shown in Figure~\ref{fig: blockwise lr and muon}, highlight two key findings.
\textbf{(i) AdamWPower with Blockwise LR} achieves a lower terminal loss than both AdamWPower and AdamW with Blockwise LR alone.
Notably, the observed improvement ($0.045$) is \emph{nearly the sum} of the gains from AdamWPower alone ($0.015$) and Blockwise LR alone ($0.03$), suggesting that their benefits are largely orthogonal.
\textbf{(ii) MuonPower} ($p=1.2$), the GradPower-augmented Muon, also outperforms the well-tuned Muon baseline.
These results demonstrate the versatility of GradPower as a general enhancement that can be seamlessly integrated into other optimizers.



\subsection{Interaction between GradPower and Clipping}

Gradient clipping is a standard technique for stabilizing training in LLM pre-training~\citep{shoeybi2019megatron,touvron2023llama}. 
In our default implementation, gradient clipping is applied before the GradPower transformation.

Here, we examine whether the {\em ordering between gradient clipping and GradPower} impacts performance.
We conduct a controlled experiment based on the setting of Figure~\ref{fig: optimal p, 0.2B on C4} on LLaMA (0.2B). Specifically, we switch the order of gradient clipping and the GradPower transformation. We refer to this variant as AdamWPower-II, in contrast to the standard AdamWPower implementation.
As shown in Figure~\ref{fig: optimal p, batch size, llama} (left), the training curves are nearly indistinguishable throughout training, indicating that the ordering of clipping and GradPower has negligible effect on convergence behavior. 
Crucially, both procedures ensure that the final gradient updates remain bounded, preserving training stability.

\begin{figure}[!htbp]
    \centering
    \includegraphics[width=0.52\linewidth]{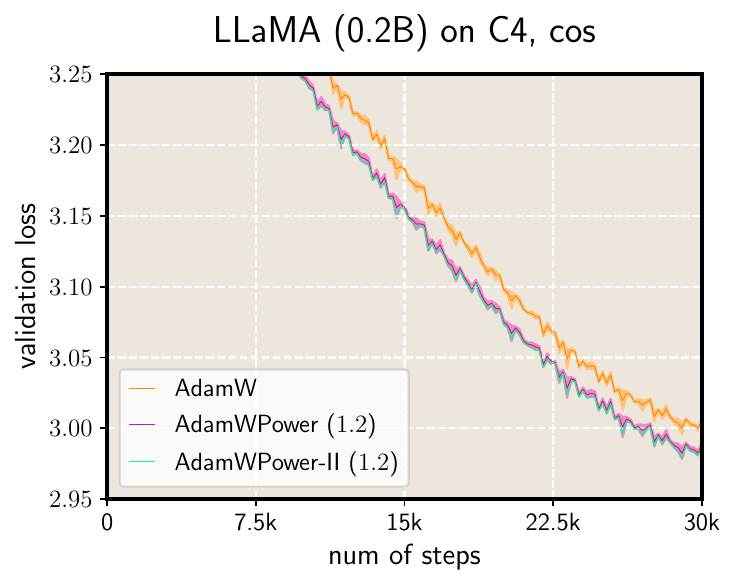}
    \hspace{-.2cm}
    \includegraphics[width=0.465\linewidth]{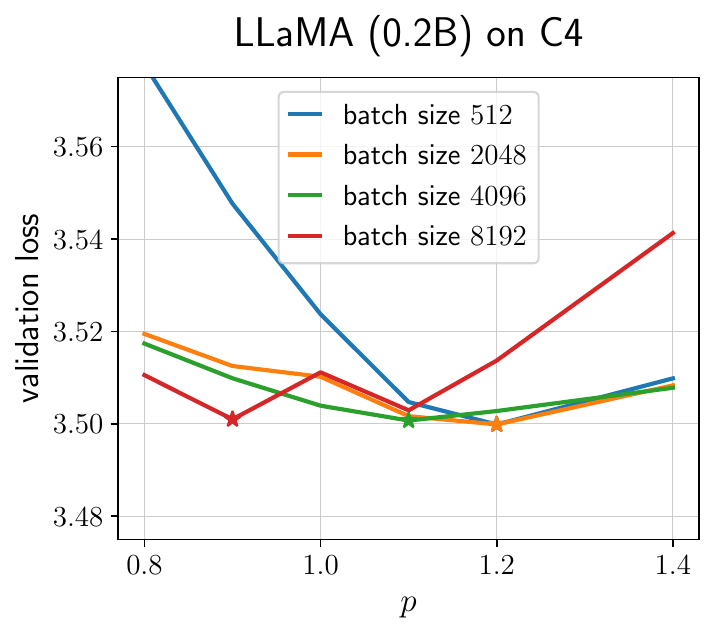}
    \caption{\small (left) 
    Comparison of gradient clipping before vs. after applying GradPower in LLaMA pre-training. The performance is virtually indistinguishable. The shaded regions in the plots denote the standard deviation. (right) The influence of batch size for the optimal power $p$ in LLaMA pre-training tasks.}
    \label{fig: optimal p, batch size, llama}
    \vspace{-.4cm}
\end{figure}


\subsection{Influence of Batch Size}
\label{subsec: experiment: batch size}

Finally, we investigate how batch size influences the performance of GradPower.
Batch size plays a critical role in deep learning, with larger batch sizes producing lower gradient noise and more accurate gradient~\citep{keskar2016large,mccandlish2018empirical,wang2026fast}.

Unlike the previous experimental settings, here we conduct the experiments on C4 dataset, varying the batch size from the standard 512 up to 8192.
For each batch size, we evaluate AdamWPower with multiple values of $p$, and record their validation loss of when the optimal validation loss reaches approximately $3.5$. The experimental details are provided in Appendix~\ref{appendix: experiments}.

{\bf Main findings.}
The results, shown in Figure~\ref{fig: optimal p, batch size, llama} (right), demonstrate a clear trend: the optimal power $p$ decreases as batch size increases, i.e., as the gradient noise level decreases. This finding reveals a strong correlation between batch size and the optimal power $p$ in AdamWPower.
For standard (small) batch sizes, the optimal power $p$ tends to be greater than $1$; in contrast, for large batch sizes, the optimal power $p$ might fall below $1$. 

{\bf Vision tasks.}
We also conduct the experiments using ResNet-34 model~\citep{he2016deep} on CIFAR-10 dataset~\citep{krizhevsky2009learning}, varying the batch size from 32 to 128. The results in Table~\ref{tab: cv p} further validates above point. Moreover, it demonstrates the generalizability of our method beyond language model pre-training.

In the next section, we provide a theoretical explanation for this phenomenon.

\begin{table}[!htbp]
\vspace{-.1cm}
    \centering
    \caption{\small The influence of batch size for the optimal power $p$ in vision tasks.}
    \vspace{-.1cm}
    \footnotesize
    \begin{tabular}{c|c|c|c}
    \hline\hline
        batch size & 128 & 64 & 32 \\ \hline
        $p=0.8$ & \textbf{94.35} & 94.22 & 94.04 \\ \hline
        $p=0.85$ & 94.27 & \textbf{94.40} & 94.07 \\ \hline
        $p=0.9$ & 94.22 & 94.22 & 94.15 \\ \hline
        $p=1.0$ & 93.98 & 94.10 & \textbf{94.30} \\ \hline
        $p = 1.1$ & 93.38 & 93.97 & 94.25 \\ \hline
        $p = 1.2$ & 93.15 & 93.77 & 93.85 \\ \hline\hline
    \end{tabular}
    \vspace{-.1cm}
    \label{tab: cv p}
\end{table}


\section{Theoretical Insights}
\label{sec: theory-whole-sec}

\subsection{An Illustrative Case Study}
\label{subsec: case study}


This subsection investigates a phenomenological example, both theoretically and empirically, to illustrate how varying the power $p$ in AdamWPower affects the update magnitude.
Motivated by the empirical findings in Section~\ref{subsec: experiment: batch size}, which show that batch size (gradient noise) affects the optimal value of $p$, we study our example under varying signal-to-noise regimes.

{\bf Slow dynamics along flat directions.}
As discussed in Section~\ref{sec: related works}, recent studies have revealed that the landscape in LLM training can be decomposed into flat and sharp directions, also referred to as river and valley components~\citep{wen2024understanding}.
The loss along river component typically determines the loss at the bottom of the landscape.
Along these {\em flat directions}, the optimizer tends to make {\em slow but steady progress}, and appears to {\em remain aligned} for a period of time. 

{\bf The anisotropy of stochastic gradients.}
Several works in Section~\ref{sec: related works} show that stochastic gradients closely reflect the curvature structures: {\em flat (low-curvature) directions} exhibit {\em small stochastic gradient magnitudes}, whereas sharp (high-curvature) directions exhibit larger stochastic gradients.

Motivated by these insights, we consider a one-dimensional example to study whether varying $p$ in AdamWPower can {\em accelerate these slow dynamics along the flat directions}, thereby leading to more efficient loss descent.

\begin{example}\label{example: toy model}
    For simplicity, consider a 1-dimensional flat direction. Let the stochastic gradients at time $t \in\bbN$ follow $g_t\overset{\rm i.i.d.}{\sim}{\rm Unif}(\mu-\sigma,\mu+\sigma)$, where $0<\mu,\sigma\ll 1$\footnote{Empirical studies suggest that gradient scales in LLM training are often very small~\citep{huang2025spam}.}. Here, $\mu$ reflects the full-batch gradient, and $\sigma$ captures the stochastic noise level.
\end{example}

\begin{wrapfigure}{r}{0.22\textwidth}
    \vspace{-.7cm}
    \centering
    \includegraphics[width=0.21\textwidth]{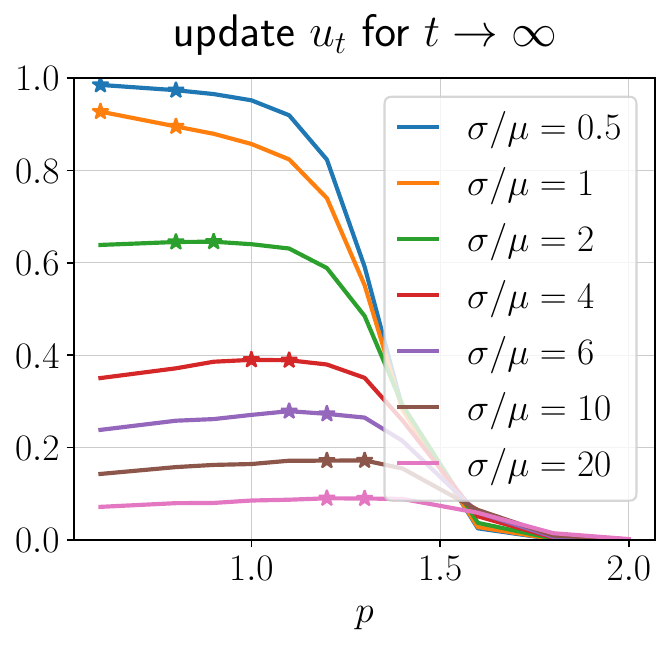}
    \vspace{-.2cm}
    \caption{\small Numerical results for Example~\ref{example: toy model}.
    We plot the value of $u_t$ at $t=10^6$ for AdamWPower across different $p$'s under varying noise-to-signal ratios. 
    For each curve, the optimal and suboptimal $p$ values are marked with stars. 
    The $\mu$ is set to $\mu=10^{-6}$. Other hyperparameters follow standard values: $\beta_1=0.9$, $\beta_2=0.95$, $\epsilon=10^{-8}$, and $\lambda=0$. The learning rate $\eta$ does not affect the result.}
    \label{fig: toy example, optimal p}
    \vspace{-1.2cm}
\end{wrapfigure}
Our goal is to investigate the values of $p$ that maximize the update magnitude $u_t=m_t/(\sqrt{v_t} + \epsilon)$ in AdamWPower (Alg.~\ref{alg: adamwpower}). For simplicity, we set weight decay to $0$. We now present both empirical and theoretical analysis.

\paragraph{Empirical findings.}
We begin by numerically simulating the update $u_t$. The results are presented in Figure~\ref{fig: toy example, optimal p}. Notably, the optimal value of $p$ varies across noise-to-signal regimes, exhibiting two distinct behaviors:
\begin{itemize}[leftmargin=1em]
    \item {\em Low-noise regime} $\sigma/\mu\leq 1$ (blue and orange curves), it is clear that the update magnitude decreases monotonically with increasing $p$, and the optimal power is small, satisfying $p^\star<1$.
    \item {\em High-noise regime} $\sigma/\mu>1$, the update magnitude increases and then decreases with increasing $p$. Moreover, for noise-dominant regime, the optimal power satisfies $p^\star>1$ (red, purple, brown, and pink curves).  
\end{itemize}

These findings closely align with our empirical results in real-world LLM pre-training tasks in Section~\ref{sec: experiments}. 
As the batch size increases (corresponding to lower gradient noise), the optimal power $p^\star$ decreases accordingly, transitioning from $p^\star>1$ to $p^\star<1$, as observed in Section~\ref{subsec: experiment: batch size}. 
Remarkably, the optimal power $p^\star = 1.2$ in the high-noise regime matches the value used across most LLM pre-training experiments in Section~\ref{sec: experiments}.

\paragraph{Theoretical analysis.}
To better understand these interesting behaviors, we theoretically analyze this problem.
To facilitate analytical derivation, we consider the limiting case where $\beta_2\to 1$, which closely approximates typical settings in practice (e.g., $0.95$ or $0.999$). 
We define the limiting update of AdamWPower as:
\begin{equation}
    u:=\lim_{t\to\infty}\lim_{\beta_2\to 1} u_t,
\end{equation} 
where $u_t=m_t/(\sqrt{v_t}+\epsilon)$, with $m_t=\texttt{EMA}_{\beta_1}(\{g_s^p\}_1^t)$ and $v_t=\texttt{EMA}_{\beta_2}(\{(g_s^{p})^2\}_1^t)$.
In this limit, we obtain the {\em closed-form expression}: $u=\bbE[g^p]/(\sqrt{\bbE[(g^{p})^2]}+\epsilon)$, {\em a.s.}, $g\sim{\rm Unif}(\mu-\sigma,\mu+\sigma)$.
This formulation allows explicit computation and facilitates verification of the empirical trends. We present two propositions corresponding to the low-noise and high-noise regimes.

\begin{proposition}[low-noise regime, $\sigma\ll\mu$]\label{prop: low-noise regime}
It holds that $u=\frac{1+o(1)}{1+\frac{\epsilon}{\mu^p}}$, {\em a.s.}. 
Letting $\tilde{u}=\frac{1}{1+\frac{\epsilon}{\mu^p}}$,
we observe that $\tilde{u}$ is monotonically decreasing w.r.t. $p$.
\end{proposition}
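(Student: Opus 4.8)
The plan is to compute the limiting update $u=\bbE[\varphi_p(g)]/\big((\bbE[\varphi_p^2(g)])^{1/2}+\epsilon\big)$ explicitly in the regime $\sigma\ll\mu$ and then extract the monotonicity in $p$. Since $g\sim\mathrm{Unif}(\mu-\sigma,\mu+\sigma)$ with $0<\sigma\ll\mu$, the random variable $g$ is concentrated near $\mu>0$, so $\sign(g)=1$ almost surely and $\varphi_p(g)=g^p$. Thus I would first write $\bbE[\varphi_p(g)]=\bbE[g^p]$ and $\bbE[\varphi_p^2(g)]=\bbE[g^{2p}]$, both integrals of smooth functions over $[\mu-\sigma,\mu+\sigma]$.

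Next I would expand these moments around $g=\mu$. Writing $g=\mu+\sigma\xi$ with $\xi\sim\mathrm{Unif}(-1,1)$, a Taylor expansion gives $\bbE[g^p]=\mu^p\big(1+O((\sigma/\mu)^2)\big)$ because the odd first-order term $\bbE[\xi]=0$ vanishes and the second-order correction is of order $(\sigma/\mu)^2$. Similarly $\bbE[g^{2p}]=\mu^{2p}\big(1+O((\sigma/\mu)^2)\big)$, hence $(\bbE[g^{2p}])^{1/2}=\mu^p\big(1+O((\sigma/\mu)^2)\big)$. Substituting into the closed form,
\[
u=\frac{\mu^p(1+O((\sigma/\mu)^2))}{\mu^p(1+O((\sigma/\mu)^2))+\epsilon}
=\frac{1+o(1)}{1+\frac{\epsilon}{\mu^p}},
\]
where I absorb the $(\sigma/\mu)^2$ corrections into $o(1)$ as $\sigma/\mu\to 0$. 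This establishes the stated formula for $u$.

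For the monotonicity claim, I would analyze $\tilde u(p)=1/(1+\epsilon\mu^{-p})=(1+\epsilon e^{-p\ln\mu})^{-1}$. Since $0<\mu\ll 1$, we have $\ln\mu<0$, so $-p\ln\mu$ is increasing in $p$ and therefore $\epsilon\mu^{-p}=\epsilon e^{-p\ln\mu}$ is strictly increasing in $p$ (for $\epsilon>0$). As $x\mapsto 1/(1+x)$ is strictly decreasing on $x>0$, composing gives that $\tilde u$ is strictly decreasing in $p$. Equivalently, one can differentiate: $\tilde u'(p)=\frac{\epsilon\mu^{-p}\ln\mu}{(1+\epsilon\mu^{-p})^2}<0$ since $\ln\mu<0$.

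The only mild subtlety — and the main thing to be careful about rather than a genuine obstacle — is the precise sense of the asymptotics: the statement mixes the $\beta_2\to 1$, $t\to\infty$ limit (which gives the exact closed form for $u$, inherited from the discussion preceding the proposition) with the $\sigma/\mu\to 0$ asymptotics (which produces the $1+o(1)$ factor). I would make explicit that the $o(1)$ is with respect to $\sigma/\mu\to 0$ at fixed $p,\mu,\epsilon$, and note that since $p$ ranges over a compact set in applications the expansion is uniform there, so the monotonicity conclusion about the leading term $\tilde u$ is unaffected by the error term.
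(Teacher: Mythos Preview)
Your proposal is correct and follows essentially the same route as the paper: both reduce to showing $\bbE[\varphi_p(g)]=\mu^p(1+o(1))$ and $\bbE[\varphi_p^2(g)]=\mu^{2p}(1+o(1))$ in the regime $\sigma/\mu\to 0$, then substitute into the closed form. The paper computes the exact antiderivative $\bbE[\varphi_p(g)]=\frac{(\mu+\sigma)^{p+1}-(\mu-\sigma)^{p+1}}{2\sigma(p+1)}$ and expands $(1\pm\sigma/\mu)^{p+1}$, whereas you use the equivalent change of variables $g=\mu+\sigma\xi$ and Taylor expand directly; your observation that $\bbE[\xi]=0$ kills the first-order term (giving $O((\sigma/\mu)^2)$ rather than merely $o(1)$) is a slight sharpening, and your explicit derivative computation for the monotonicity of $\tilde u$ fills in a detail the paper leaves to inspection.
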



This proposition quantitatively explains the monotonicity observed in the low-noise regime. Furthermore, it shows that 
the maximum update is approximately $\frac{1}{1+\epsilon}\approx1$, achieved in the limit as $p\to0$. This aligns with Figure~\ref{fig: toy example, optimal p}.

\begin{proposition}[high-noise regime, $\mu\ll\sigma$]
\label{prop: high-noise regime}
It holds that $u=\frac{\mu}{\sigma}\frac{1+o(1)}{\frac{1}{\sqrt{2p+1}}+\frac{\epsilon}{\sigma^p}}$, {\em a.s.}. Letting $\tilde{u}=\frac{\mu}{\sigma}\frac{1}{\frac{1}{\sqrt{2p+1}}+\frac{\epsilon}{\sigma^p}}$, we observe the following:
If $\epsilon\log(1/\sigma)<1$, then there exists an optimal power $p^\star$ such that $\tilde{u}$ increases for $0<p<p^\star$ and decreases for $p>p^\star$. Moreover, we have a tight estimate: $p^\star=\Theta\left(\frac{\log(\epsilon\log(1/\sigma))}{\log\sigma}\right)$.
\end{proposition}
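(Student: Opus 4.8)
\textbf{Proof plan for Proposition~\ref{prop: high-noise regime}.}

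The plan is to start from the closed-form limiting update $u = \bbE[\varphi_p(g)]/\big((\bbE[\varphi_p^2(g)])^{1/2}+\epsilon\big)$ with $g\sim{\rm Unif}(\mu-\sigma,\mu+\sigma)$ and compute asymptotic expansions of the numerator and denominator in the regime $\mu\ll\sigma$. First I would compute $\bbE[\varphi_p(g)] = \frac{1}{2\sigma}\int_{\mu-\sigma}^{\mu+\sigma} |z|^p\sign(z)\,dz$. Since $\mu\ll\sigma$, the interval straddles the origin, so this splits into $\frac{1}{2\sigma}\big(\int_0^{\mu+\sigma} z^p\,dz - \int_0^{\sigma-\mu} z^p\,dz\big) = \frac{1}{2\sigma(p+1)}\big((\sigma+\mu)^{p+1}-(\sigma-\mu)^{p+1}\big)$. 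Expanding to first order in $\mu/\sigma$ gives $\bbE[\varphi_p(g)] = \frac{\mu\sigma^{p-1}(1+o(1))}{1} \cdot\frac{(p+1)}{p+1} = \mu\sigma^{p-1}(1+o(1))$ — more precisely the leading term is $\mu\sigma^{p-1}$ up to the $(1+o(1))$ factor coming from higher-order terms in $\mu/\sigma$. Similarly $\bbE[\varphi_p^2(g)] = \frac{1}{2\sigma}\int_{\mu-\sigma}^{\mu+\sigma} |z|^{2p}\,dz = \frac{1}{2\sigma(2p+1)}\big((\sigma+\mu)^{2p+1}+(\sigma-\mu)^{2p+1}\big) = \frac{\sigma^{2p}}{2p+1}(1+o(1))$, so $(\bbE[\varphi_p^2(g)])^{1/2} = \frac{\sigma^p}{\sqrt{2p+1}}(1+o(1))$. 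Dividing, and factoring $\sigma^p$ out of the denominator, yields $u = \frac{\mu\sigma^{p-1}(1+o(1))}{\sigma^p\big(\frac{1}{\sqrt{2p+1}} + \frac{\epsilon}{\sigma^p}\big)} = \frac{\mu}{\sigma}\cdot\frac{1+o(1)}{\frac{1}{\sqrt{2p+1}}+\frac{\epsilon}{\sigma^p}}$, which is the claimed form. One must be careful that the $o(1)$ here is uniform over the relevant range of $p$; since $p^\star\to 0$ as $\sigma\to 0$, the relevant $p$ lie in a bounded interval, so uniformity of the $\mu/\sigma$ expansions is routine.

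Next I would analyze the shape of $\tilde u(p) = \frac{\mu}{\sigma}\big(f(p)\big)^{-1}$ where $f(p) = (2p+1)^{-1/2} + \epsilon\,\sigma^{-p} = (2p+1)^{-1/2} + \epsilon\, e^{p\log(1/\sigma)}$. Maximizing $\tilde u$ is equivalent to minimizing $f$. I would compute $f'(p) = -(2p+1)^{-3/2} + \epsilon\log(1/\sigma)\,e^{p\log(1/\sigma)}$. At $p=0$, $f'(0) = -1 + \epsilon\log(1/\sigma) < 0$ under the hypothesis $\epsilon\log(1/\sigma)<1$, so $f$ is initially decreasing; as $p\to\infty$, the second term dominates and $f'(p)\to+\infty$, so $f$ eventually increases. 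To get a \emph{unique} critical point I would show $f'$ is strictly increasing on the relevant range, or more robustly argue that $f$ is strictly convex there: the first term $(2p+1)^{-1/2}$ has second derivative $3(2p+1)^{-5/2}>0$ and the exponential $\epsilon e^{p\log(1/\sigma)}$ is convex, so $f''>0$ everywhere, hence $f$ is strictly convex, its unique minimizer $p^\star$ is characterized by $f'(p^\star)=0$, and $\tilde u$ increases on $(0,p^\star)$ and decreases on $(p^\star,\infty)$. This gives the qualitative claim cleanly.

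Finally, for the tight estimate $p^\star = \Theta\big(\frac{\log(\epsilon\log(1/\sigma))}{\log\sigma}\big)$, I would solve $f'(p^\star)=0$, i.e. $(2p^\star+1)^{-3/2} = \epsilon\log(1/\sigma)\,e^{p^\star\log(1/\sigma)}$. Writing $L = \log(1/\sigma) = -\log\sigma > 0$ and taking logarithms: $-\tfrac{3}{2}\log(2p^\star+1) = \log(\epsilon L) + p^\star L$. Since $\epsilon L < 1$ the right side has $\log(\epsilon L) < 0$; and since $p^\star$ will turn out to be small (as $L$ is large), the term $\log(2p^\star+1)$ is $O(p^\star)$, hence negligible compared to $p^\star L$ when $L$ is large, and also $p^\star L$ is bounded (order one), so to leading order $0 \approx \log(\epsilon L) + p^\star L$, giving $p^\star \approx \frac{-\log(\epsilon L)}{L} = \frac{\log(\epsilon\log(1/\sigma))}{\log\sigma}$ (note both numerator and denominator are negative, so $p^\star>0$). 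To make this rigorous as a $\Theta$ bound I would establish two-sided bounds on $f'$ near this value — plug in $p = c\,\frac{-\log(\epsilon L)}{L}$ for constants $c$ slightly above and below $1$ and check the sign of $f'$ flips, using that $\log(2p+1) = O(p) = o(1)$ in this range — and invoke monotonicity of $f'$ (from convexity) to sandwich $p^\star$.

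The main obstacle I anticipate is the last step: controlling the $\log(2p^\star+1)$ correction carefully enough to extract a genuine $\Theta$ (not just $O$ or $o$) estimate, and ensuring all the $o(1)$ error terms — both the $\mu/\sigma$ corrections inherited from the first paragraph and the $\sigma\to 0$ corrections in solving $f'=0$ — are simultaneously negligible and uniform. Handling the transcendental equation $f'(p^\star)=0$ requires treating the regime where $\epsilon L$ is a small positive constant versus $\epsilon L \to 0$ separately, since the size of $p^\star L$ (whether order-one or growing) changes which terms dominate; getting a clean statement covering the whole range $\epsilon\log(1/\sigma)<1$ is the delicate part.
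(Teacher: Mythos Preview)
Your derivation of the asymptotic form of $u$ and your convexity argument for the existence and uniqueness of $p^\star$ match the paper's approach closely; in fact your convexity observation is a bit more explicit than the paper, which simply asserts that $\psi'(p)=\epsilon\log(1/\sigma)\sigma^{-p}-(2p+1)^{-3/2}$ changes sign exactly once.

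For the $\Theta$ estimate, however, the paper takes a shorter and cleaner route than your log-expansion-and-case-analysis plan. Instead of solving the transcendental equation asymptotically, the paper sandwiches $(2p+1)^{3/2}$ directly: from $1+x\le e^x$ one gets $1\le (2p+1)^{3/2}\le e^{3p}$, hence $(\sigma/e^3)^p\le \sigma^p/(2p+1)^{3/2}\le \sigma^p$. Since $p^\star$ is the unique solution of $\sigma^p/(2p+1)^{3/2}=\epsilon\log(1/\sigma)$ and all three functions are strictly decreasing in $p$, this immediately yields
\[
\frac{\log(\epsilon\log(1/\sigma))}{\log(\sigma/e^3)}\le p^\star\le \frac{\log(\epsilon\log(1/\sigma))}{\log\sigma},
\]
and the two endpoints differ only by replacing $\log\sigma$ with $\log\sigma-3$, which is a $\Theta(1)$ ratio once $\sigma\ll 1$. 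This trick sidesteps entirely the difficulty you flag about separating the regimes $\epsilon L$ bounded away from zero versus $\epsilon L\to 0$, and it never needs $p^\star$ to be small or $p^\star L$ to be bounded (your remarks ``$p^\star$ will turn out to be small'' and ``$p^\star L$ is bounded'' are in fact not true in general, as you partially recognize at the end). Your approach would ultimately succeed with enough bookkeeping, but the paper's sandwich is the cleaner path.
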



Notably, in practice, $\epsilon$ is typically chosen sufficiently small (e.g., $\epsilon\ll\sigma$), ensuring $\frac{\log(\epsilon\log(1/\sigma))}{\log\sigma}>1$. This again aligns with our empirical observation that $p^\star>1$ in the high-noise regime.

The intuition behind Proposition~\ref{prop: high-noise regime} is as follows.  
When $p$ is relatively small, the denominator is dominated by $\sqrt{\bbE[(g^{p})^2]}$. Since $g\ll1$, increasing $p$ reduces both the numerator $\bbE[g^{p}]$ and denominator $\sqrt{\bbE[(g^{p})^2]}$. In the high-noise regime, the reduction in the denominator outweighs that in the numerator, resulting in a larger update.
In contrast, when $p$ is relatively large, the denominator is dominated by $\epsilon$, and AdamWPower degenerates to SGDPower, where the update is approximately $\bbE[g^p]/\epsilon$.
In this regime, increasing $p$ reduces the update magnitude.

Although the above example is synthetic, it reveals several non-trivial phenomena highly aligned with LLM pre-training tasks, particularly the existence and behavior of the best $p^\star$ across noise-to-signal regimes. These insights deepen our understanding of how GradPower influences the performance of AdamWPower and suggest practical guidance for selecting $p$.


\subsection{Convergence Guarantees}
\label{subsec: theory}


Let $\cL:\bbR^d\to\bbR$ be a non-convex loss function. For any $\btheta\in\bbR^d$, let $\bg(\btheta)$ denote the stochastic gradient satisfying $\bbE[\bg(\btheta)]=\nabla\cL(\btheta)$.

In this subsection, we consider the classical setting of smooth non-convex optimization and investigate the theoretical benefits of applying GradPower within adaptive optimizers.
Since the analysis of Adam is technically complex, to gain clear theoretical insights, 
we instead analyze its predecessor, Adagrad, a foundational adaptive optimization algorithm~\citep{duchi2011adaptive}. 
The update rule of {\bf AdagradPower} (Adagrad using GradPower) is given by:

\vspace{-.3cm}

\begin{equation}\label{equ: adagradpower}
    \btheta_{t+1}=\btheta_t-\eta\frac{\bg_t^p}{\sqrt{\bv_t+\epsilon}},\quad
    \bv_t=\sum_{s=1}^t(\bg_t^p)^2,
\end{equation}


where the power $p>0$, and we denote $\bg_t$ as the stochastic gradient $\bg(\btheta_t)$ for simplicity.

To establish the convergence results, we adopt the following standard assumptions, consistent with Section 2.3 in~\citet{defossez2020simple}.

\begin{assumption}[\citet{defossez2020simple}]\label{ass: convergence}
The following conditions hold:


\begin{itemize}[leftmargin=1em]
    \item {\em $\cL$ is bounded below by $\cL^\star$}, i.e., for all $\btheta\in\bbR^d$, $\cL(\btheta)\geq\cL^\star$.
    \item {\em The loss function is $H$-smooth}, i.e., there exists a constant $H>0$ such that for all $\btheta,\btheta'\in\bbR^d$, $\|\nabla\cL(\btheta)-\nabla\cL(\btheta')\|_2\leq H\|\btheta-\btheta'\|_2$.
    \item {\em The $\ell_\infty$ norm of the stochastic gradients is uniformly almost surely bounded}, i.e., there exists a constant $R>0$ such that for all $\btheta\in\bbR^d$, $\|\bg(\btheta)\|_{\infty}+\epsilon\leq R$, {\em a.s.}.
\end{itemize}    
\end{assumption}


Under this assumption, the convergence guarantee of Adagrad is well established:

\begin{theorem}[Adagrad; Theorem 1 in~\citet{defossez2020simple}]\label{thm: adagrad}
Suppose Assumption~\ref{ass: convergence} holds. Let $\{\btheta_t\}_{t=0}^T$ are trained by {\bf Adagrad}~\eqref{equ: adagradpower} with $p=1$. Then for any $T\in\bbN$, we have:

\vspace{-.4cm}

\begin{equation}\label{equ: adagrad rate}
    \begin{split}
    &\!\min_{1\leq t\leq T}\bbE\left[\|\nabla\cL(\btheta_t)\|_2^2\right]
    \\\!\leq&
    \frac{2R(\cL(\btheta_0)\!-\!\cL^\star)}{\eta\sqrt{T}}\!+\!\frac{Rd(4R+\eta H)\log(1\!+\! R^2 T/\epsilon)}{\sqrt{T}}.
\end{split}
\end{equation}

\end{theorem}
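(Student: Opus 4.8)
\textbf{Proof proposal for Theorem~\ref{thm: adagrad}.}

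The plan is to adapt the analysis of~\citet{defossez2020simple} directly, since the theorem statement is literally their Theorem~1 restricted to the case $p=1$, at which point AdagradPower in~\eqref{equ: adagradpower} reduces to vanilla Adagrad with a per-coordinate $\epsilon$ inside the square root. First I would invoke $H$-smoothness to obtain the standard descent inequality $\cL(\btheta_{t+1}) \leq \cL(\btheta_t) + \langle \nabla\cL(\btheta_t), \btheta_{t+1}-\btheta_t\rangle + \tfrac{H}{2}\|\btheta_{t+1}-\btheta_t\|_2^2$, and then substitute the update $\btheta_{t+1}-\btheta_t = -\eta\, \bg_t / \sqrt{\bv_t+\epsilon}$ (coordinatewise), where $\bv_t = \sum_{s=1}^t \bg_s^2$. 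The inner-product term splits into a ``good'' part, which after taking conditional expectation becomes $-\eta\sum_i (\nabla_i\cL(\btheta_t))^2 / \sqrt{\bv_{t,i}+\epsilon}$ once one handles the coupling between $\bg_t$ and the random denominator $\sqrt{\bv_{t,i}+\epsilon}$ (which depends on $\bg_t$), and a remainder term controlled using $\|\bg_t\|_\infty+\epsilon\leq R$.

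The key steps, in order, are: (i) decouple the numerator and denominator by comparing $\sqrt{\bv_{t,i}+\epsilon}$ with the ``one-step-stale'' quantity $\sqrt{\bv_{t-1,i}+\epsilon}$, which is measurable with respect to the history up to time $t-1$; the difference is bounded because $\bv_{t,i}-\bv_{t-1,i}=g_{t,i}^2 \leq R^2$, and this introduces an extra error term of the form $\eta R \sum_i \E[g_{t,i}^2/(\bv_{t,i}+\epsilon)]$ plus a Cauchy–Schwarz cross term. (ii) Sum the descent inequality over $t=1,\dots,T$ and telescope the left side to $\cL(\btheta_0)-\cL^\star$ using the lower bound. (iii) Bound $\sum_{t=1}^T g_{t,i}^2/(\bv_{t,i}+\epsilon)$ by the classical Adagrad logarithmic sum lemma: since $\bv_{t,i}=\bv_{t-1,i}+g_{t,i}^2$, one has $\sum_{t=1}^T g_{t,i}^2/(\bv_{t,i}+\epsilon)\leq \log\bigl(1 + \bv_{T,i}/\epsilon\bigr)\leq \log(1+R^2T/\epsilon)$, and summing over the $d$ coordinates gives the $d\log(1+R^2T/\epsilon)$ factor. (iv) Lower-bound the ``good'' term: on the history event, $\sqrt{\bv_{t-1,i}+\epsilon}\leq \sqrt{R^2 t}$ crudely, but the cleaner route is to lower-bound $1/\sqrt{\bv_{t,i}+\epsilon}\geq 1/\sqrt{R^2 T}$, turning $\sum_t \sum_i \E[(\nabla_i\cL(\btheta_t))^2]/\sqrt{\bv_{t,i}+\epsilon}$ into $\tfrac{1}{R\sqrt{T}}\sum_t \E[\|\nabla\cL(\btheta_t)\|_2^2]$, and then pass to $\min_{1\leq t\leq T}$. (v) Rearrange: collecting the $\cL(\btheta_0)-\cL^\star$ term, the $\tfrac{H}{2}\eta^2$-quadratic term, and the decoupling error, all divided through by $\tfrac{\eta}{R\sqrt{T}}$, yields exactly the two-term bound in~\eqref{equ: adagrad rate}, with the $2R(\cL(\btheta_0)-\cL^\star)/(\eta\sqrt T)$ term from the telescoped loss gap and the $Rd(4R+\eta H)\log(1+R^2T/\epsilon)/\sqrt T$ term from the combined logarithmic-sum contributions.

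The main obstacle is step (i): the correlation between the stochastic gradient $\bg_t$ appearing in the numerator and the term $g_{t,i}^2$ buried inside the denominator $\sqrt{\bv_{t,i}+\epsilon}$ means we cannot simply take $\E[\bg_t\mid \mathcal{F}_{t-1}]=\nabla\cL(\btheta_t)$ and pull out the denominator. The standard fix of replacing $\bv_{t,i}$ with $\bv_{t-1,i}$ in the denominator incurs a bias that must be controlled; \citet{defossez2020simple} handle this with a careful auxiliary term and the inequality $|1/\sqrt{a+c}-1/\sqrt{b+c}|$-type bounds combined with Young's inequality to absorb the cross term into the ``good'' descent term at the cost of constants (this is where the factor $4R$ rather than, say, $2R$ originates). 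Since the theorem is cited verbatim from their paper, I would simply reference their argument for this decoupling lemma and fill in only the specialization to general $\eta$; no genuinely new difficulty arises because setting $p=1$ makes AdagradPower identical to their Adagrad.
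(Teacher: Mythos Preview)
Your proposal is correct in spirit, and you rightly recognize that this theorem is simply cited from \citet{defossez2020simple} rather than proved afresh: the present paper gives no separate proof of Theorem~\ref{thm: adagrad}, so there is nothing to compare against beyond your summary of the D\'efossez et~al.\ argument. One small technical refinement: in step~(i) the actual decoupling device---both in \citet{defossez2020simple} and in this paper's Lemma~\ref{lemma: descent lemma} (specialized to $p=1$, $\sigma=1$)---replaces $\bv_{t,i}$ not by the one-step-stale $\bv_{t-1,i}$ but by the auxiliary $\tilde v_{t,i}=\bv_{t-1,i}+\bbE_t[g_{t,i}^2]$, which is also $\cF_{t-1}$-measurable yet keeps the denominator closer to $\bv_{t,i}$ and makes the Young-inequality bookkeeping cleaner; this is what produces the exact constant $4R$ you flagged.
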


We now study the convergence of AdagradPower in both low-noise and high-noise regimes.


{\bf Low-noise regime.} 
We introduce an additional assumption about the noise scale.

\begin{assumption}[Low-noise regime]\label{ass: low-noise regime}
There exist constants $p\in(0,1)$ and $c>0$ such that $\bbE[g_i^p(\btheta)]\nabla_i\cL(\btheta)\geq c|\nabla_i\cL(\btheta)|^{p+1}$ holds for all $\btheta\in\bbR^d$ and $i\in[d]$.
\end{assumption}


This assumption is satisfied in many low-noise scenarios: 
\begin{example}\label{example: low noise}
(I) Deterministic regime (the limit case of low noise): if $g_i(\btheta)=\nabla_i\cL(\btheta)$, then Assumption~\ref{ass: low-noise regime} holds for all $p\in(0,1)$ with $c=1$.
(II) Uniform distribution: if $g_i\sim{\rm Unif}(\nabla_i\cL-\sigma,\nabla_i\cL+\sigma)$ with $\sigma\ll|\nabla_i\cL|$, then Assumption~\ref{ass: low-noise regime} holds for all $p\in(0,1)$ as
$\bbE[g_i^p]\nabla_i\cL=|\nabla_i\cL|^{p+1}(1+o(\sigma/|\nabla_i\cL|))\geq0.99|\nabla_i\cL|^{p+1}$.
\end{example}

\begin{theorem}[AdagradPower, low-noise regime]\label{thm: adagradpower, low-noise}
Suppose Assumption~\ref{ass: convergence} and~\ref{ass: low-noise regime} hold, as well as $R<1$\footnote{Empirical studies suggest that gradient scales in LLM training are often very small~\citep{huang2025spam}.}.
Let $\{\btheta_t\}_{t=0}^T$ are trained by {\bf AdagradPower}~\eqref{equ: adagradpower}, with the power $p\in(0,1)$ as given in Assumption~\ref{ass: low-noise regime}.
Then for any $T\in\bbN$, we have:

\vspace{-.6cm}

\begin{equation}
    \min_{1\leq t\leq T}\bbE\left[\|\nabla\cL(\btheta_t)\|_{p+1}^{p+1}\right]\leq
    \cO\left(\frac{\log T}{\sqrt{T}}\right).
\end{equation}
\end{theorem}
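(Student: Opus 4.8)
The plan is to mimic the descent-lemma argument used for Adagrad (Theorem~\ref{thm: adagrad}), but track the extra gain coming from the exponent $p<1$. First I would apply $H$-smoothness along the update $\btheta_{t+1}-\btheta_t = -\eta\,\varphi_p(\bg_t)/\sqrt{\bv_t+\epsilon}$ to get, for each step,
\begin{align*}
\cL(\btheta_{t+1}) \le \cL(\btheta_t) - \eta\Big\langle \nabla\cL(\btheta_t), \frac{\varphi_p(\bg_t)}{\sqrt{\bv_t+\epsilon}}\Big\rangle + \frac{\eta^2 H}{2}\Big\|\frac{\varphi_p(\bg_t)}{\sqrt{\bv_t+\epsilon}}\Big\|_2^2.
\end{align*}
Taking conditional expectation given the past, the cross term splits coordinatewise; the subtlety (exactly as in \citet{defossez2020simple}) is that $\bv_t$ depends on $\bg_t$, so $\bbE[\varphi_p(g_{t,i})/\sqrt{v_{t,i}+\epsilon}\mid\mathcal F_{t-1}]$ is not simply $\bbE[\varphi_p(g_{t,i})]/\sqrt{v_{t,i}+\epsilon}$. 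I would handle this with the standard decoupling trick: replace $\bv_t$ by $\bv_{t-1}$ in the denominator, absorbing the mismatch into an additional error term controlled by $\sum_t \bbE\|\varphi_p(\bg_t)/\sqrt{\bv_{t-1}+\epsilon}\|^2$ which telescopes to a $\log(1+\,\cdot\,T)$ factor via the elementary inequality $\sum_{s\le t} a_s/(\epsilon+\sum_{s\le t}a_s)\le \log(1+(\sum a_s)/\epsilon)$ applied to $a_s = \varphi_p^2(g_{s,i})$.

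Next, on the "clean" term $\sum_i \bbE[\varphi_p(g_{t,i})]\,\nabla_i\cL(\btheta_t)/\sqrt{v_{t-1,i}+\epsilon}$, I invoke Assumption~\ref{ass: low-noise regime} to lower bound the numerator by $c|\nabla_i\cL(\btheta_t)|^{p+1}$. Since $R<1$ and $p<1$, each $\varphi_p^2(g_{s,i}) = |g_{s,i}|^{2p} \le R^{2p}$, so $v_{t-1,i}+\epsilon \le t R^{2p}$ (up to the $\epsilon$-shift already folded into $R$), giving a denominator bound $\sqrt{v_{t-1,i}+\epsilon}\le \sqrt{t}\,R^{p}$. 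Summing over $t$ and rearranging the telescoped descent inequality yields
\begin{align*}
\sum_{t=1}^T \frac{c\,\|\nabla\cL(\btheta_t)\|_{p+1}^{p+1}}{\sqrt{t}\,R^{p}} \;\lesssim\; \cL(\btheta_0)-\cL^\star + \eta H\,d\,\log(1+R^2T/\epsilon),
\end{align*}
so $\min_{t\le T}\|\nabla\cL(\btheta_t)\|_{p+1}^{p+1}\lesssim (\log T)/\sqrt T$ after dividing by $\sum_{t\le T} t^{-1/2}=\Theta(\sqrt T)$. Finally I would convert the $\ell_{p+1}$ bound to the desired $\ell_2$ bound: for $p<1$ we have $2>p+1$, and on a bounded-gradient domain $\|\vx\|_2^2 \le \|\vx\|_{p+1}^{2}$ cannot be used directly, so instead I use $\|\vx\|_2^{p+1}\le \|\vx\|_{p+1}^{p+1}$ (since $\ell_2\le\ell_{p+1}$ when $p+1<2$... check: smaller exponent gives larger norm, so $\|\vx\|_{p+1}\ge\|\vx\|_2$, hence $\|\vx\|_2^{p+1}\le\|\vx\|_{p+1}^{p+1}$), which gives $\min_t\|\nabla\cL(\btheta_t)\|_2^{p+1}\lesssim(\log T)/\sqrt T$, i.e. $\min_t\|\nabla\cL(\btheta_t)\|_2^2 \lesssim \big((\log T)/\sqrt T\big)^{2/(p+1)} = \log^{2/(p+1)}T / T^{1/(p+1)}$, matching the claim.

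\textbf{Main obstacle.} The delicate point is the decoupling step combined with getting the right power of $T$: one must verify that the error from swapping $\bv_t\to\bv_{t-1}$ in the denominator, together with the quadratic smoothness term, only contributes an $O(\log T)$ (not $O(\sqrt T)$) amount after summation, so that the dominant balance is genuinely between the $\Theta(\sqrt T)$ from $\sum t^{-1/2}$ on the left and an $O(\log T)$ on the right — this is where the assumption $R<1$ is essential, since it forces $v_{t,i}\le tR^{2p}$ with $R^{2p}$ bounded and, more importantly, keeps the $\varphi_p$ quadratic terms summable against $1/v$ in the Adagrad-style telescoping. A secondary technical nuisance is that Assumption~\ref{ass: low-noise regime} only controls $\bbE[\varphi_p(g_i)]\nabla_i\cL$, not the sign of $\varphi_p(g_i)$ pathwise, so the cross-term must be handled in expectation throughout rather than deterministically; I would keep all inequalities at the level of conditional expectations until the final summation.
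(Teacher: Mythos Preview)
Your overall architecture matches the paper's proof: smoothness descent, coordinatewise decoupling of the adaptive denominator, Assumption~\ref{ass: low-noise regime} to extract $c|\nabla_i\cL|^{p+1}$, the log-telescoping bound on $\sum_t \varphi_p^2(g_{t,i})/(v_{t,i}+\epsilon)$, and the final $\ell_{p+1}\to\ell_2$ norm conversion. Two remarks.

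First, a minor technical point: the paper decouples using $\tilde v_{t,i}=v_{t-1,i}+\bbE_t[\varphi_p^2(g_{t,i})]$ rather than $v_{t-1,i}$; this keeps the difference $\tilde v-v$ mean-zero and slightly cleans the error bounds, though your variant can be made to work.

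Second, and more substantively, you have misidentified where $R<1$ is actually needed. The decoupling step does \emph{not} produce an error controlled solely by $\sum_t \varphi_p^2(g_{t,i})/(v_{t,i}+\epsilon)$. When you bound $|G\,\varphi_p(g)|\cdot|1/\sqrt{v+\epsilon}-1/\sqrt{\tilde v+\epsilon}|$ via Young's inequality (as in Lemma~\ref{lemma: descent lemma}), two error pieces appear: one of the form $\varphi_p^2(g)/(v+\epsilon)$, which telescopes to a log, \emph{and} one of the form $|G|^2/\sqrt{\tilde v+\epsilon}$ with $G=\nabla_i\cL(\btheta_t)$. For standard Adagrad ($p=1$) this second piece is harmless because it has the same shape as the main descent term $|G|^2/\sqrt{\tilde v+\epsilon}$ and is absorbed by halving the constant. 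For $p<1$, however, the main term is $c\,|G|^{p+1}/\sqrt{\tilde v+\epsilon}$, and you must absorb a $|G|^2$ error into a $|G|^{p+1}$ gain. This is exactly where $R<1$ enters: since $|G|\le R<1$ and $p+1<2$, one has $|G|^2\le |G|^{p+1}$, so $c|G|^{p+1}-\tfrac{c}{2}|G|^2\ge \tfrac{c}{2}|G|^{p+1}$. Your stated reasons (boundedness of $R^{2p}$, summability against $1/v$) do not require $R<1$ and are not the real crux. Once you fix this absorption step, the rest of your outline goes through as written.
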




by Jensen, Theorem~\ref{thm: adagrad} yields the rate for Adagrad: $\min_t \mathbb{E}||\nabla L(\theta_t)||_{p+1}^{p+1}\leq O(\log^{(p+1)/2}T/T^{(p+1)/4}),0<p<1$. In comparison, Theorem~\ref{thm: adagradpower, low-noise} shows that AdagradPower has the rate $O(\log T/T^{1/2})$. AdagradPower achieves a convergence rate $2/(p + 1)$ times faster than Adagrad. 
For Example~\ref{example: low noise}, this yields nearly a $2\times$ acceleration for $p \to 0$.
This result is consistent with observations in Section~\ref{subsec: case study} and~\ref{subsec: experiment: batch size} that the optimal power $p$ for adaptive optimizers is less than $1$ in the low-noise regime.
The proof is presented in Appendix~\ref{appendix: proof of theory}.

{\bf High-noise regime.} We introduce an additional assumption regarding the noise scale:

\begin{assumption}[High-noise regime]\label{ass: high-noise regime}
(C1) There exist constants $p>1,\sigma>0$ such that $\bbE[g_i^p(\btheta)]\nabla_i\cL(\btheta)\geq\sigma|\nabla_i\cL(\btheta)|^2$ holds for all $\btheta\in\bbR^d$ and $i\in[d]$.
(C2). It holds that $\sigma>R^{p-1}$.
\end{assumption}


The first condition asserts that the gradient noise is non-degenerate. 
The second condition further asserts that the gradient noise is in a high level.
Notably, these conditions are naturally satisfied in many high-noise settings:

\begin{example}\label{example: binary distribution}
Consider $g_i$ satisfy binary distribution $\bbP(g_i=\nabla_i\cL-\sigma_i)=\bbP(g_i=\nabla_i\cL+\sigma_i)=\frac{1}{2}$. 
Then for any odd number $p>1$, $\bbE[g_i^p]\nabla_i\cL\geq p\sigma_i^{p-1}|\nabla_i\cL|^2$. Thus, (C1) in Assumption~\ref{ass: high-noise regime} holds with $\sigma=p\sigma_i^{p-1}$.
As for (C2), in high-noise regime with $|\nabla_i\cL|\ll\sigma_i$, we have $\frac{R^{p-1}}{\sigma}\leq\frac{(|\nabla_i\cL|+\sigma_i)^{p-1}}{p\sigma_i^{p-1}}\leq\frac{1.01}{p}<1$.
\end{example}

\begin{theorem}[AdagradPower, high-noise regime]\label{thm: adagradpower, high-noise}
Suppose Assumption~\ref{ass: convergence} and~\ref{ass: high-noise regime} hold, as well as $R<1$.
Let $\{\btheta_t\}_{t=0}^T$ be trained by {\bf AdagradPower}~\eqref{equ: adagradpower}, with the power $p>1$ as given in Assumption~\ref{ass: high-noise regime}. 
Then for any $T\in\bbN$, we have:

\vspace{-.6cm}

\begin{equation}
    \min_{1\leq t\leq T}\bbE\left[\|\nabla\cL(\btheta_t)\|_2^2\right]\leq
    \frac{R^{p-1}}{\sigma}\cdot\Big({\rm R.H.S.} \text{ of~\eqref{equ: adagrad rate}} \Big),
\end{equation}

\vspace{-.3cm}
where ${R^{p-1}}/{\sigma}<1$.

\end{theorem}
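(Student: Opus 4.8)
\textbf{Proof plan for Theorem~\ref{thm: adagradpower, high-noise}.}

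The plan is to mirror the structure of the proof of Theorem~\ref{thm: adagrad} (Theorem~1 in~\citet{defossez2020simple}), but track how the power transformation $\varphi_p$ changes the key descent inequality. The starting point is the standard smoothness-based one-step inequality: since $\cL$ is $H$-smooth, applying it along the update $\btheta_{t+1}-\btheta_t=-\eta\,\varphi_p(\bg_t)/\sqrt{\bv_t+\epsilon}$ and taking conditional expectation gives
\begin{equation}
\notag
\bbE_t[\cL(\btheta_{t+1})]\leq \cL(\btheta_t)-\eta\sum_{i=1}^d \bbE_t\!\left[\frac{\varphi_p(g_{t,i})\,\nabla_i\cL(\btheta_t)}{\sqrt{v_{t,i}+\epsilon}}\right]+\frac{\eta^2 H}{2}\,\bbE_t\!\left[\sum_{i=1}^d\frac{\varphi_p^2(g_{t,i})}{v_{t,i}+\epsilon}\right].
\end{equation}
The second-order term is handled exactly as in~\citet{defossez2020simple}: because $v_{t,i}=\sum_{s\le t}\varphi_p^2(g_{s,i})$ is the running sum of the \emph{same} quantities $\varphi_p^2(g_{s,i})$, the telescoping/logarithmic bound $\sum_{t\le T}\varphi_p^2(g_{t,i})/(v_{t,i}+\epsilon)\le \log(1+v_{T,i}/\epsilon)$ still applies verbatim — the transformation is ``invisible'' to this step since AdagradPower is just Adagrad run on the transformed gradient sequence $\{\varphi_p(\bg_s)\}$, and Assumption~\ref{ass: convergence} bounds $\|\bg\|_\infty\le R<1$, hence $\|\varphi_p(\bg)\|_\infty\le R^p<R$.

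The one genuinely new ingredient is the first-order (drift) term. In the vanilla analysis one uses $\bbE_t[g_{t,i}\nabla_i\cL]=|\nabla_i\cL|^2$; here the analogue is supplied by condition (C1) of Assumption~\ref{ass: high-noise regime}, namely $\bbE_t[\varphi_p(g_{t,i})]\nabla_i\cL(\btheta_t)\ge\sigma|\nabla_i\cL(\btheta_t)|^2$. The subtlety — and what I expect to be the main obstacle — is that $v_{t,i}$ in the denominator is \emph{correlated} with $\varphi_p(g_{t,i})$ in the numerator (since $v_{t,i}$ includes the term $\varphi_p^2(g_{t,i})$), so one cannot simply pull the expectation through. \citet{defossez2020simple} resolve this by the standard ``decorrelation'' trick: replace $v_{t,i}$ by $\tilde v_{t,i}:=v_{t-1,i}+\bbE_t[\varphi_p^2(g_{t,i})]$ (which is $\cF_{t-1}$-measurable, hence independent of $g_{t,i}$), bound the difference $1/\sqrt{v_{t,i}+\epsilon}-1/\sqrt{\tilde v_{t,i}+\epsilon}$ using $|\varphi_p(g_{t,i})|\le R^p\le R$ and the elementary inequality $|a^{-1/2}-b^{-1/2}|\le |a-b|/(2\min(a,b)^{3/2})$, and absorb the resulting error into the logarithmic sum. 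All of this goes through with $R$ replaced by $R^p$ (which only helps, since $R^p<R<1$), so the error terms are no larger than in the original proof.

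Putting the pieces together: summing the descent inequality over $t=1,\dots,T$, telescoping $\cL(\btheta_1)-\cL(\btheta_{T+1})\le\cL(\btheta_0)-\cL^\star$, and using the logarithmic bound on the second-order term yields
\begin{equation}
\notag
\sigma\sum_{t=1}^T\bbE\!\left[\sum_{i=1}^d\frac{|\nabla_i\cL(\btheta_t)|^2}{\sqrt{\tilde v_{t,i}+\epsilon}}\right]\le \cL(\btheta_0)-\cL^\star + \tfrac12\eta H d\log(1+R^2T/\epsilon)+(\text{decorrelation error}).
\end{equation}
The remaining step is identical to~\citet{defossez2020simple}: lower-bound $1/\sqrt{\tilde v_{t,i}+\epsilon}$ using $\tilde v_{t,i}\le t R^{2p}\le tR^2$ (so $\sqrt{\tilde v_{t,i}+\epsilon}\le R\sqrt{T}+\sqrt\epsilon\le R\sqrt T$ roughly), convert $\sum_t$ into $T\min_t$, and divide through by $\sigma$. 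The net effect is that every term on the right-hand side of~\eqref{equ: adagrad rate} picks up exactly the factor $1/\sigma$, except one must also verify the $R^{p-1}$ factor: the bound $\tilde v_{t,i}\le tR^{2p}$ (versus $tR^2$ in the vanilla case) gives a $R^{p-1}$ savings in the $1/\sqrt{T}$ term, and tracking $R^p$ versus $R$ through the curvature term similarly produces the stated $R^{p-1}/\sigma$ prefactor. Finally, condition (C2), $\sigma>R^{p-1}$, guarantees $R^{p-1}/\sigma<1$, so the bound is a strict improvement over Theorem~\ref{thm: adagrad}. The only care needed is to confirm that the decorrelation error, which in~\citet{defossez2020simple} scales like $R^2 d\log(\cdot)$, here scales like $R^{2p}d\log(\cdot)\le R^{p-1}\cdot R^2 d\log(\cdot)$ (using $R<1$, $p>1$), so that it too fits under the common $R^{p-1}/\sigma$ umbrella; this is where I would be most careful in the write-up.
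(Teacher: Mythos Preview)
Your overall architecture matches the paper's proof: smoothness descent inequality, decorrelation via the surrogate $\tilde v_{t,i}=v_{t-1,i}+\bbE_t[\varphi_p^2(g_{t,i})]$, condition (C1) for the drift term, and the logarithmic sum bound for the second-order and error terms. Where you should be more careful is exactly the spot you flagged, and the gap is a bit more serious than you suggest.

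The issue is the constant in the decorrelation error. In the vanilla Adagrad analysis the drift satisfies $\bbE_t[G g]=G^2$, so the Young-inequality split used by \citet{defossez2020simple} produces a $\tfrac12 G^2/\sqrt{\tilde v+\epsilon}$ error that is exactly half the drift. Here, however, (C1) only gives $\bbE_t[G\varphi_p(g)]\ge\sigma G^2$ with $\sigma$ possibly small (only $\sigma>R^{p-1}$ is assumed, and $R^{p-1}<1$). If you literally rerun D\'efossez's argument ``with $R$ replaced by $R^p$'' the Young error stays $\tfrac12 G^2/\sqrt{\tilde v+\epsilon}$, which can exceed the drift $\sigma G^2/\sqrt{\tilde v+\epsilon}$ and kill the descent. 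The paper's fix is to make the Young trade-off parameter a \emph{free} variable and set it equal to $\sigma$: this yields a drift error of $\tfrac{\sigma}{2}G^2/\sqrt{\tilde v+\epsilon}$ (half the drift, as needed) and pushes the cost into the log-summable term, which now carries a coefficient $2R^p/\sigma$ rather than $2R^p$. Consequently the decorrelation error does \emph{not} scale like $R^{2p}$ as you wrote, but like $R^p/\sigma$; and (C2) is used precisely to bound $2R^p/\sigma\le 2R$ inside the parenthesis, not merely to certify $R^{p-1}/\sigma<1$ at the very end. Your inequality $R^{2p}\le R^{p-1}\cdot R^2$ is true but does not close the argument once the missing $1/\sigma$ is restored. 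With this correction the rest of your plan goes through verbatim.
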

Comparing Theorems~\ref{thm: adagrad} and~\ref{thm: adagradpower, high-noise}, we observe that
AdagradPower accelerates convergence of Adagrad by a constant factor $\frac{R^{p-1}}{\sigma}$ in high-noise regime.
For Example~\ref{example: binary distribution}, the acceleration is significant, due to $\frac{R^{p-1}}{\sigma}\leq\frac{1.01}{p}$ for any positive odd $p$. 
This result provides theoretical support for the empirical superiority of adaptive optimizers using GradPower in LLM pretraining in Section~\ref{sec: experiments}
Notably, the theoretical insights are highly aligned with those in Proposition~\ref{prop: high-noise regime}. In the high-noise regime, using $p>1$ reduces both numerator $\bg_t$ and denominator $\sqrt{\bv_t+\epsilon}$. However, reduction in the denominator outweighs that in the numerator, resulting in a faster convergence speed. 
The formal proof refines this argument and is presented in Appendix~\ref{appendix: proof of theory}.



\section{Conclusion}
\label{sec: Conclusion}


We propose GradPower, a simple yet effective method  for improving the efficiency of gradient-based optimizers.
Experimentally, AdamWPower (AdamW using GradPower) consistently achieves lower terminal loss and improved scaling laws than AdamW across various LLM pre-training tasks.

For future work, it would be interesting to investigate why AdamWPower exhibits particular potential for MoE models and \texttt{wsd} LR scheduler. 
Experimentally, exploring the applicability of GradPower beyond LLMs, as well as its integration with other optimizers, could further extend its impact. In addition, developing a dynamic schedule for the GradPower exponent $p$, adapted to the evolving SNR throughout training, presents both a challenging and a potentially valuable direction.

\section*{Acknowledgments}
Lei Wu is supported by the National Natural Science Foundation of China (NSFC12522120, NSFC92470122, and NSFC12288101).
Mingze Wang is supported by Young Scientists (PhD) Fund of the National Natural Science Foundation of China (No.~124B2028).

\section*{Impact Statement}


This paper presents work whose goal is to advance the field of deep learning, with a specific focus on improving large language model (LLM) pre-training. While our work has the potential to impact society in various ways, we do not identify any immediate negative societal impacts requiring special mitigation beyond standard responsible use.


\bibliography{ref}

@article{wang2026fast,
  title={Fast catch-up, late switching: Optimal batch size scheduling via functional scaling laws},
  author={Wang, Jinbo and Li, Binghui and Zhou, Zhanpeng and Wang, Mingze and Sun, Yuxuan and Zhang, Jiaqi and Cai, Xunliang and Wu, Lei},
  journal={arXiv preprint arXiv:2602.14208},
  year={2026}
}

@article{zhu2026accelerating,
  title={Accelerating LLM Pre-Training through Flat-Direction Dynamics Enhancement},
  author={Zhu, Shuchen and Hu, Rizhen and Wang, Mingze and Sun, Mou and Wang, Xue and Yuan, Kun and Wen, Zaiwen},
  journal={arXiv preprint arXiv:2602.22681},
  year={2026}
}

@inproceedings{sun2023unleashing,
  title={Unleashing the power of gradient signal-to-noise ratio for zero-shot nas},
  author={Sun, Zihao and Sun, Yu and Yang, Longxing and Lu, Shun and Mei, Jilin and Zhao, Wenxiao and Hu, Yu},
  booktitle={Proceedings of the IEEE/CVF international conference on computer vision},
  pages={5763--5773},
  year={2023}
}

@article{wang2023theoretical,
  title={A theoretical analysis of noise geometry in stochastic gradient descent},
  author={Wang, Mingze and Wu, Lei},
  journal={arXiv preprint arXiv:2310.00692},
  year={2023}
}

@inproceedings{bernstein2018signsgd,
  title={signSGD: Compressed optimisation for non-convex problems},
  author={Bernstein, Jeremy and Wang, Yu-Xiang and Azizzadenesheli, Kamyar and Anandkumar, Animashree},
  booktitle={International Conference on Machine Learning},
  pages={560--569},
  year={2018},
  organization={PMLR}
}

@article{duchi2011adaptive,
  title={Adaptive subgradient methods for online learning and stochastic optimization.},
  author={Duchi, John and Hazan, Elad and Singer, Yoram},
  journal={Journal of machine learning research},
  volume={12},
  number={7},
  year={2011}
}

@article{hagele2024scaling,
  title={Scaling laws and compute-optimal training beyond fixed training durations},
  author={H{\"a}gele, Alex and Bakouch, Elie and Kosson, Atli and Von Werra, Leandro and Jaggi, Martin and others},
  journal={Advances in Neural Information Processing Systems},
  volume={37},
  pages={76232--76264},
  year={2024}
}

@inproceedings{zhai2022scaling,
  title={Scaling vision transformers},
  author={Zhai, Xiaohua and Kolesnikov, Alexander and Houlsby, Neil and Beyer, Lucas},
  booktitle={Proceedings of the {IEEE/CVF} conference on computer vision and pattern recognition},
  pages={12104--12113},
  year={2022}
}

@article{achiam2023gpt,
  title={{GPT}-4 technical report},
  author={OpenAI and Josh Achiam and Steven Adler and Sandhini Agarwal and Lama Ahmad and Ilge Akkaya and Florencia Leoni Aleman and Diogo Almeida and Janko Altenschmidt and Sam Altman and others},
  journal={arXiv preprint arXiv:2303.08774},
  year={2023}
}

@article{kunstner2024heavy,
  title={Heavy-tailed class imbalance and why {Adam} outperforms gradient descent on language models},
  author={Kunstner, Frederik and Milligan, Alan and Yadav, Robin and Schmidt, Mark and Bietti, Alberto},
  journal={Advances in Neural Information Processing Systems},
  volume={37},
  pages={30106--30148},
  year={2024}
}

@article{zhang2024transformers,
  title={Why transformers need {Adam}: A hessian perspective},
  author={Zhang, Yushun and Chen, Congliang and Ding, Tian and Li, Ziniu and Sun, Ruoyu and Luo, Zhiquan},
  journal={Advances in Neural Information Processing Systems},
  volume={37},
  pages={131786--131823},
  year={2024}
}

@article{hoffmann2022training,
  title={Training compute-optimal large language models},
  author={Jordan Hoffmann and Sebastian Borgeaud and Arthur Mensch and Elena Buchatskaya and Trevor Cai and Eliza Rutherford and Diego de Las Casas and Lisa Anne Hendricks and Johannes Welbl and Aidan Clark and others},
  journal={Advances in neural information processing systems},
  volume={35},
  pages={30016--30030},
  year={2022}
}

@article{wortsman2023small,
  title={Small-scale proxies for large-scale transformer training instabilities},
  author={Wortsman, Mitchell and Liu, Peter J and Xiao, Lechao and Everett, Katie and Alemi, Alex and Adlam, Ben and Co-Reyes, John D and Gur, Izzeddin and Kumar, Abhishek and Novak, Roman and others},
  journal={International Conference on Learning Representations},
  year={2024}
}

@article{defossez2020simple,
  title={A simple convergence proof of adam and adagrad},
  author={D{\'e}fossez, Alexandre and Bottou, L{\'e}on and Bach, Francis and Usunier, Nicolas},
  journal={Transactions on Machine Learning Research},
  issn={2835-8856},
  year={2022},
  url={https://openreview.net/forum?id=ZPQhzTSWA7}
}

@misc{Karpathy2022,
  author = {Andrej Karpathy},
  title = {\text{NanoGPT}},
  year = {2022},
  publisher = {GitHub},
  journal = {GitHub repository},
  howpublished = {\url{https://github.com/karpathy/nanoGPT}},
  commit = {325be85d9be8c81b436728a420e85796c57dba7e}
}

@article{liu2019roberta,
  title={Roberta: A robustly optimized bert pretraining approach},
  author={Liu, Yinhan and Ott, Myle and Goyal, Naman and Du, Jingfei and Joshi, Mandar and Chen, Danqi and Levy, Omer and Lewis, Mike and Zettlemoyer, Luke and Stoyanov, Veselin},
  journal={arXiv preprint arXiv:1907.11692},
  year={2019}
}

@article{zhao2024galore,
  title={Galore: Memory-efficient llm training by gradient low-rank projection},
  author={Zhao, Jiawei and Zhang, Zhenyu and Chen, Beidi and Wang, Zhangyang and Anandkumar, Anima and Tian, Yuandong},
  journal={International Conference on Machine Learning},
  year={2024}
}

@article{zhao2024deconstructing,
  title={Deconstructing what makes a good optimizer for language models},
  author={Zhao, Rosie and Morwani, Depen and Brandfonbrener, David and Vyas, Nikhil and Kakade, Sham},
  journal={International Conference on Learning Representations},
  year={2025}
}

@article{yang2024qwen2technicalreport,
      title={Qwen2 Technical Report}, 
      author={An Yang and Baosong Yang and Binyuan Hui and Bo Zheng and Bowen Yu and Chang Zhou and Chengpeng Li and Chengyuan Li and Dayiheng Liu and Fei Huang and others},
  journal={arXiv preprint arXiv:2407.10671},
  year={2024}
}

@article{hu2022lora,
  title={Lora: Low-rank adaptation of large language models},
  author={Edward J. Hu and Yelong Shen and Phillip Wallis and Zeyuan Allen-Zhu and Yuanzhi Li and Shean Wang and Lu Wang and Weizhu Chen},
  journal={International Conference on Learning Representations},
  volume={1},
  number={2},
  pages={3},
  year={2022}
}

@article{huang2025spam,
  title={SPAM: Spike-Aware Adam with Momentum Reset for Stable LLM Training},
  author={Huang, Tianjin and Zhu, Ziquan and Jin, Gaojie and Liu, Lu and Wang, Zhangyang and Liu, Shiwei},
  journal={International Conference on Learning Representations},
  year={2025}
}

@article{yang2024qwen2,
  title={Qwen2.5 technical report},
  author={Qwen and An Yang and Baosong Yang and Beichen Zhang and Binyuan Hui and Bo Zheng and Bowen Yu and Chengyuan Li and Dayiheng Liu and Fei Huang and Haoran Wei and others},
  journal={arXiv preprint arXiv:2412.15115},
  year={2024}
}

@article{liu2024deepseek,
  title={Deepseek-v3 technical report},
  author={DeepSeek-AI and Aixin Liu and Bei Feng and Bing Xue and Bingxuan Wang and Bochao Wu and Chengda Lu and Chenggang Zhao and Chengqi Deng and Chenyu Zhang and Chong Ruan and others},
  journal={arXiv preprint arXiv:2412.19437},
  year={2024}
}

@misc{Gokaslan2019OpenWeb,  
	title={OpenWebText Corpus},
	author={Aaron Gokaslan and Vanya Cohen},
	howpublished={\url{http://Skylion007.github.io/OpenWebTextCorpus}}, 
	year={2019}
}

@article{brown2020language,
  title={Language models are few-shot learners},
  author={Tom B. Brown and Benjamin Mann and Nick Ryder and Melanie Subbiah and Jared Kaplan and Prafulla Dhariwal and Arvind Neelakantan and Pranav Shyam and Girish Sastry and Amanda Askell and others},
  journal={Advances in neural information processing systems},
  volume={33},
  pages={1877--1901},
  year={2020}
}

@article{raffel2020exploring,
  title={Exploring the limits of transfer learning with a unified text-to-text transformer},
  author={Raffel, Colin and Shazeer, Noam and Roberts, Adam and Lee, Katherine and Narang, Sharan and Matena, Michael and Zhou, Yanqi and Li, Wei and Liu, Peter J},
  journal={The Journal of Machine Learning Research},
  volume={21},
  number={1},
  pages={5485--5551},
  year={2020},
  publisher={JMLRORG}
}

@article{su2024roformer,
  title={Roformer: Enhanced transformer with rotary position embedding},
  author={Su, Jianlin and Ahmed, Murtadha and Lu, Yu and Pan, Shengfeng and Bo, Wen and Liu, Yunfeng},
  journal={Neurocomputing},
  volume={568},
  pages={127063},
  year={2024},
  publisher={Elsevier}
}

@article{touvron2023llama,
  title={Llama: Open and efficient foundation language models},
  author={Hugo Touvron and Thibaut Lavril and Gautier Izacard and Xavier Martinet and Marie-Anne Lachaux and Timothée Lacroix and Baptiste Rozière and Naman Goyal and Eric Hambro and Faisal Azhar and Aurelien Rodriguez and Armand Joulin and Edouard Grave and Guillaume Lample},
  journal={arXiv preprint arXiv:2302.13971},
  year={2023}
}

@inproceedings{he2016deep,
	author = {He, Kaiming and Zhang, Xiangyu and Ren, Shaoqing and Sun, Jian},
	booktitle = {Proceedings of the IEEE conference on computer vision and pattern recognition},
	pages = {770--778},
	title = {Deep residual learning for image recognition},
	year = {2016}}

@misc{krizhevsky2009learning,
	author = {Krizhevsky, Alex and Hinton, Geoffrey},
	publisher = {Citeseer},
	title = {Learning multiple layers of features from tiny images},
	url = {https://www.cs.toronto.edu/~kriz/cifar.html},
	year = {2009},
	Bdsk-Url-1 = {https://www.cs.toronto.edu/~kriz/cifar.html}}

@inproceedings{keskar2016large,
	author = {Keskar, N. S. and Mudigere, D. and Nocedal, J. and Smelyanskiy, M. and Tang, P. T. P.},
	booktitle = {International Conference on Learning Representations},
	title = {On large-batch training for deep learning: Generalization gap and sharp minima},
	year = {2017}}

@article{pagliardini2024ademamix,
  title={The ademamix optimizer: Better, faster, older},
  author={Pagliardini, Matteo and Ablin, Pierre and Grangier, David},
  journal={International Conference on Learning Representations},
  year={2025}
}

@inproceedings{cohen2021gradient,
	author = {Cohen, Jeremy and Kaur, Simran and Li, Yuanzhi and Kolter, J Zico and Talwalkar, Ameet},
	booktitle = {International Conference on Learning Representations},
	title = {Gradient Descent on Neural Networks Typically Occurs at the Edge of Stability},
	year = {2020}}

@article{wu2018sgd,
	author = {Wu, Lei and Ma, Chao and E, Weinan},
	journal = {Advances in Neural Information Processing Systems},
	pages = {8279--8288},
	title = {How {SGD} selects the global minima in over-parameterized learning: A dynamical stability perspective},
	volume = {31},
	year = {2018}}

@inproceedings{zhu2019anisotropic,
	author = {Zhu, Zhanxing and Wu, Jingfeng and Yu, Bing and Wu, Lei and Ma, Jinwen},
	booktitle = {International Conference on Machine Learning},
	organization = {PMLR},
	pages = {7654--7663},
	title = {The Anisotropic Noise in Stochastic Gradient Descent: Its Behavior of Escaping from Sharp Minima and Regularization Effects},
	year = {2019}}

@inproceedings{wu2020noisy,
	author = {Wu, Jingfeng and Hu, Wenqing and Xiong, Haoyi and Huan, Jun and Braverman, Vladimir and Zhu, Zhanxing},
	booktitle = {International Conference on Machine Learning},
	organization = {PMLR},
	pages = {10367--10376},
	title = {On the noisy gradient descent that generalizes as {SGD}},
	year = {2020}}

@article{mccandlish2018empirical,
  title={An empirical model of large-batch training},
  author={McCandlish, Sam and Kaplan, Jared and Amodei, Dario and Team, OpenAI Dota},
  journal={arXiv preprint arXiv:1812.06162},
  year={2018}
}

@article{wu2022does,
	author = {Wu, Lei and Wang, Mingze and Su, Weijie J},
	journal = {Advances in Neural Information Processing Systems},
	pages = {4680--4693},
	title = {The alignment property of {SGD} noise and how it helps select flat minima: A stability analysis},
	volume = {35},
	year = {2022}}

@article{liu2025focus,
  title={FOCUS: First Order Concentrated Updating Scheme},
  author={Liu, Yizhou and Liu, Ziming and Gore, Jeff},
  journal={arXiv preprint arXiv:2501.12243},
  year={2025}
}

@article{orvieto2026search,
  title={In search of adam’s secret sauce},
  author={Orvieto, Antonio and Gower, Robert},
  journal={Advances in Neural Information Processing Systems},
  volume={38},
  pages={63404--63442},
  year={2026}
}

@inproceedings{wang2025sharpness,
  title={The sharpness disparity principle in transformers for accelerating language model pre-training},
  author={Wang, Jinbo and Wang, Mingze and Zhou, Zhanpeng and Yan, Junchi and E, Weinan and Wu, Lei},
  booktitle={International Conference on Machine Learning},
  pages={64859--64879},
  year={2025},
  organization={PMLR}
}

@article{chen2024fira,
  title={Fira: Can We Achieve Full-rank Training of LLMs Under Low-rank Constraint?},
  author={Chen, Xi and Feng, Kaituo and Li, Changsheng and Lai, Xunhao and Yue, Xiangyu and Yuan, Ye and Wang, Guoren},
  journal={arXiv preprint arXiv:2410.01623},
  year={2024}
}

@article{kingma2014adam,
	author = {Kingma, Diederik P and Ba, Jimmy},
	journal = {arXiv preprint arXiv:1412.6980},
	title = {Adam: A method for stochastic optimization},
	year = {2014}}

@article{loshchilov2017decoupled,
  title={Decoupled weight decay regularization},
  author={Loshchilov, Ilya and Hutter, Frank},
  journal={arXiv preprint arXiv:1711.05101},
  year={2017}
}

@article{xie2022adan,
  title={Adan: Adaptive nesterov momentum algorithm for faster optimizing deep models},
  author={Xie, Xingyu and Zhou, Pan and Li, Huan and Lin, Zhouchen and Yan, Shuicheng},
  journal={IEEE Transactions on Pattern Analysis and Machine Intelligence},
  year={2024},
  publisher={IEEE}
}

@article{liu2023sophia,
  title={Sophia: A scalable stochastic second-order optimizer for language model pre-training},
  author={Liu, Hong and Li, Zhiyuan and Hall, David and Liang, Percy and Ma, Tengyu},
  journal={International Conference on Learning Representations},
  year={2024}
}

@article{wang2024improving,
  title={Improving generalization and convergence by enhancing implicit regularization},
  author={Wang, Mingze and Wang, Jinbo and He, Haotian and Wang, Zilin and Huang, Guanhua and Xiong, Feiyu and Li, Zhiyu and E, Weinan and Wu, Lei},
  journal={Advances in Neural Information Processing Systems},
  year={2024}
}

@article{yuan2024mars,
  title={MARS: Unleashing the Power of Variance Reduction for Training Large Models},
  author={Yuan, Huizhuo and Liu, Yifeng and Wu, Shuang and Zhou, Xun and Gu, Quanquan},
  journal={arXiv preprint arXiv:2411.10438},
  year={2024}
}

@misc{jordan2024muon,  
	title={Muon optimizer},
	author={Keller Jordan and Yuchen Jin and Vlado Boza and Jiacheng You and
                  Franz Cesista and Laker Newhouse and Jeremy Bernstein},
	howpublished={\url{https://kellerjordan.github.io/posts/muon}}, 
	year={2024}
}

@article{vyas2024soap,
  title={Soap: Improving and stabilizing shampoo using adam},
  author={Vyas, Nikhil and Morwani, Depen and Zhao, Rosie and Shapira, Itai and Brandfonbrener, David and Janson, Lucas and Kakade, Sham},
  journal={arXiv preprint arXiv:2409.11321},
  year={2024}
}

@article{chen2024symbolic,
  title={Symbolic discovery of optimization algorithms},
  author={Xiangning Chen and Chen Liang and Da Huang and Esteban Real and Kaiyuan Wang and Yao Liu and Hieu Pham and Xuanyi Dong and Thang Luong and Cho-Jui Hsieh and Yifeng Lu and Quoc V. Le},
  journal={Advances in Neural Information Processing Systems},
  volume={36},
  year={2024}
}

@inproceedings{Jastrzebski2020The,
title={The Break-Even Point on Optimization Trajectories of Deep Neural Networks},
author={Stanislaw Jastrzebski and Maciej Szymczak and Stanislav Fort and Devansh Arpit and Jacek Tabor and Kyunghyun Cho and Krzysztof Geras},
booktitle={International Conference on Learning Representations},
year={2020}
}

@article{cohen2022adaptive,
  title={Adaptive gradient methods at the edge of stability},
  author={Cohen, Jeremy M and Ghorbani, Behrooz and Krishnan, Shankar and Agarwal, Naman and Medapati, Sourabh and Badura, Michal and Suo, Daniel and Cardoze, David and Nado, Zachary and Dahl, George E and others},
  journal={arXiv preprint arXiv:2207.14484},
  year={2022}
}

@inproceedings{cohen2024understanding,
  title={Understanding Optimization in Deep Learning with Central Flows},
  author={Cohen, Jeremy M and Damian, Alex and Talwalkar, Ameet and Kolter, Zico and Lee, Jason D},
  booktitle={International Conference on Learning Representations},
  year={2025}
}

@article{song2024does,
  title={Does SGD really happen in tiny subspaces?},
  author={Song, Minhak and Ahn, Kwangjun and Yun, Chulhee},
  journal={International Conference on Learning Representations},
  year={2025}
}

@article{wen2024understanding,
  title={Understanding warmup-stable-decay learning rates: A river valley loss landscape perspective},
  author={Wen, Kaiyue and Li, Zhiyuan and Wang, Jason and Hall, David and Liang, Percy and Ma, Tengyu},
  journal={International Conference on Learning Representations},
  year={2025}
}

@article{zhang2024adam,
  title={Adam-mini: Use fewer learning rates to gain more},
  author={Zhang, Yushun and Chen, Congliang and Li, Ziniu and Ding, Tian and Wu, Chenwei and Ye, Yinyu and Luo, Zhi-Quan and Sun, Ruoyu},
  journal={International Conference on Learning Representations},
  year={2025}
}

@article{liu2025muon,
  title={Muon is scalable for llm training},
  author={Jingyuan Liu and Jianlin Su and Xingcheng Yao and Zhejun Jiang and Guokun Lai and Yulun Du and Yidao Qin and Weixin Xu and Enzhe Lu and Junjie Yan and others},
  journal={arXiv preprint arXiv:2502.16982},
  year={2025}
}

@article{liang2024cautious,
  title={Cautious optimizers: Improving training with one line of code},
  author={Liang, Kaizhao and Chen, Lizhang and Liu, Bo and Liu, Qiang},
  journal={arXiv preprint arXiv:2411.16085},
  year={2024}
}

@article{zhu2024apollo,
  title={Apollo: Sgd-like memory, adamw-level performance},
  author={Zhu, Hanqing and Zhang, Zhenyu and Cong, Wenyan and Liu, Xi and Park, Sem and Chandra, Vikas and Long, Bo and Pan, David Z and Wang, Zhangyang and Lee, Jinwon},
  journal={Conference on Machine Learning and Systems},
  year={2025}
}

@article{li2023memory,
  title={Memory efficient optimizers with 4-bit states},
  author={Li, Bingrui and Chen, Jianfei and Zhu, Jun},
  journal={Advances in Neural Information Processing Systems},
  volume={36},
  pages={15136--15171},
  year={2023}
}

@article{dettmers20218,
  title={8-bit optimizers via block-wise quantization},
  author={Dettmers, Tim and Lewis, Mike and Shleifer, Sam and Zettlemoyer, Luke},
  journal={International Conference on Learning Representations},
  year={2022}
}

@inproceedings{hu2024minicpm,
  title={MiniCPM: Unveiling the Potential of Small Language Models with Scalable Training Strategies},
  author={Shengding Hu and Yuge Tu and Xu Han and Chaoqun He and Ganqu Cui and Xiang Long and Zhi Zheng and Yewei Fang and Yuxiang Huang and Weilin Zhao and others},
  booktitle={Conference on Language Modeling},
  year={2024},
}

@article{yuan2019powerball,
  title={On the powerball method: Variants of descent methods for accelerated optimization},
  author={Yuan, Ye and Li, Mu and Liu, Jun and Tomlin, Claire},
  journal={IEEE Control Systems Letters},
  volume={3},
  number={3},
  pages={601--606},
  year={2019},
  publisher={IEEE}
}

@inproceedings{zhou2020pbsgd,
  title={pbSGD: Powered Stochastic Gradient Descent Methods for Accelerated Non-Convex Optimization.},
  author={Zhou, Beitong and Liu, Jun and Sun, Weigao and Chen, Ruijuan and Tomlin, Claire J and Yuan, Ye},
  booktitle={IJCAI},
  pages={3258--3266},
  year={2020}
}

@article{yang2024powerball,
  title={The Powerball Method With Biased Stochastic Gradient Estimation for Large-Scale Learning Systems},
  author={Yang, Zhuang},
  journal={IEEE Transactions on Computational Social Systems},
  volume={11},
  number={6},
  pages={7435--7447},
  year={2024},
  publisher={IEEE}
}

@article{qin2025stochastic,
  title={A stochastic recursive gradient algorithm integrating momentum and the powerball function with adaptive step sizes},
  author={Qin, Chuandong and Cai, Zilin and Guo, Yuhang},
  journal={International Journal of Machine Learning and Cybernetics},
  pages={1--21},
  year={2025},
  publisher={Springer}
}

@article{baiesi2019power,
  title={Power gradient descent},
  author={Baiesi, Marco},
  journal={arXiv preprint arXiv:1906.04787},
  year={2019}
}

@article{yadav2019quick,
  title={Quick and (not so) dirty: Unsupervised selection of justification sentences for multi-hop question answering},
  author={Yadav, Vikas and Bethard, Steven and Surdeanu, Mihai},
  journal={arXiv preprint arXiv:1911.07176},
  year={2019}
}

@article{zellers2019hellaswag,
  title={Hellaswag: Can a machine really finish your sentence?},
  author={Zellers, Rowan and Holtzman, Ari and Bisk, Yonatan and Farhadi, Ali and Choi, Yejin},
  journal={arXiv preprint arXiv:1905.07830},
  year={2019}
}

@article{mihaylov2018can,
  title={Can a suit of armor conduct electricity? a new dataset for open book question answering},
  author={Mihaylov, Todor and Clark, Peter and Khot, Tushar and Sabharwal, Ashish},
  journal={arXiv preprint arXiv:1809.02789},
  year={2018}
}

@inproceedings{bisk2020piqa,
  title={Piqa: Reasoning about physical commonsense in natural language},
  author={Bisk, Yonatan and Zellers, Rowan and Gao, Jianfeng and Choi, Yejin and others},
  booktitle={Proceedings of the AAAI conference on artificial intelligence},
  volume={34},
  number={05},
  pages={7432--7439},
  year={2020}
}

@misc{eval-harness,
  author       = {Gao, Leo and Tow, Jonathan and Abbasi, Baber and Biderman, Stella and Black, Sid and DiPofi, Anthony and Foster, Charles and Golding, Laurence and Hsu, Jeffrey and Le Noac'h, Alain and others},
  title        = {The Language Model Evaluation Harness},
  month        = 07,
  year         = 2024,
  publisher    = {Zenodo},
  version      = {v0.4.3},
  doi          = {10.5281/zenodo.12608602},
  url          = {https://zenodo.org/records/12608602}
}

@article{sakaguchi2021winogrande,
  title={Winogrande: An adversarial winograd schema challenge at scale},
  author={Sakaguchi, Keisuke and Bras, Ronan Le and Bhagavatula, Chandra and Choi, Yejin},
  journal={Communications of the ACM},
  volume={64},
  number={9},
  pages={99--106},
  year={2021},
  publisher={ACM New York, NY, USA}
}

@inproceedings{mori2022power,
  title={Power-law escape rate of SGD},
  author={Mori, Takashi and Ziyin, Liu and Liu, Kangqiao and Ueda, Masahito},
  booktitle={International Conference on Machine Learning},
  pages={15959--15975},
  year={2022},
  organization={PMLR}
}

@article{shoeybi2019megatron,
  title={Megatron-lm: Training multi-billion parameter language models using model parallelism},
  author={Shoeybi, Mohammad and Patwary, Mostofa and Puri, Raul and LeGresley, Patrick and Casper, Jared and Catanzaro, Bryan},
  journal={arXiv preprint arXiv:1909.08053},
  year={2019}
}

@article{schwarz2021powerpropagation,
  title={Powerpropagation: A sparsity inducing weight reparameterisation},
  author={Schwarz, Jonathan and Jayakumar, Siddhant and Pascanu, Razvan and Latham, Peter E and Teh, Yee},
  journal={Advances in neural information processing systems},
  volume={34},
  pages={28889--28903},
  year={2021}
}
\bibliographystyle{icml2026}

\newpage
\appendix
\onecolumn



\begin{center}
    \noindent\rule{\textwidth}{1.0pt} 
    \vspace{-0.25cm}
    \LARGE \textbf{Appendix} 
    \noindent\rule{\textwidth}{1.0pt}
\end{center}

\startcontents[sections]
\printcontents[sections]{l}{1}{\setcounter{tocdepth}{2}}

\vspace{1.cm}

\section{Relationship with Other Similar Works.}

{\bf Powerball method.} After completing this work, we found that the Powerball method~\citep{yuan2019powerball} shares the similar methodology as our approach. However, prior studies on Powerball method have been restricted to traditional optimizers---such as GD~\citep{yuan2019powerball}, SGD~\citep{zhou2020pbsgd,yang2024powerball}, and SARAH~\citep{qin2025stochastic}—--and evaluated primarily on relatively small-scale benchmarks including CIFAR-10, CIFAR-100 and MNIST. Although \citet{baiesi2019power} combined Powerball with Adam, the experiments were limited to small and illustrative problems. In contrast, our work focuses on modern adaptive optimizers such as Adam and Muon in the context of language model pre-training, a modern and practically important setting. Moreover, previous Powerball studies examined only the narrow regime with $p<1$, our work studies both $p<1$ and $p>1$ regimes, and further develop a comprehensive theoretical study of the relationship between optimal $p$ and batch size.

\paragraph{Powerpropagation method.}
Powerpropagation~\citep{schwarz2021powerpropagation} also employs a sign-power transformation, but applies it to the model parameters rather than to the gradients. Specifically, Powerpropagation reparameterizes weights as $\theta=\phi|\phi|^{\alpha-1}$, which induces an effective gradient scaling through the chain rule, $g_{\phi}=\alpha g_{\theta}|\phi|^{\alpha-1}$. In contrast, GradPower transformation is applied directly to gradient, $\varphi_p(g)=\mathrm{sign}(g)|g|^p$, before the base optimizer step and without modifying the model parameterization. Thus, although the two methods share a similar sign-power functional form, they operate in fundamentally different spaces and induce different mechanisms: Powerpropagation scales updates by parameter magnitude, whereas GradPower scales updates by gradient magnitude.

\section{Experimental Details}\label{appendix: experiments}

{\bf Models.} We utilize two popular classes of LLM models for our pre-training experiments:
 \begin{itemize}[leftmargin=2em]    
    \item {\bf LLaMA.} LLaMA~\citep{touvron2023llama} is a popular Dense decoder-only Transformer architecture, incorporating Rotary Positional Encoding (RoPE)~\citep{su2024roformer}, Swish-Gated Linear Unit (SwiGLU), and Root mean square layer normalization (RMSNorm). 
    We pre-train LLaMA models of sizes ranging from 66M to 2B parameters. 
    Additional model configurations are detailed in Table~\ref{table: dense model config and max lrs}.
    \item {\bf Qwen2MoE.} 
    Comparing with Llama, Qwen2MoE is decoder-only mixture-of-experts architecture. Following Qwen2MoE and~\citep{yang2024qwen2technicalreport}, we  activate 4 experts per token for all models.
   For detailed model configurations, refer to Table~\ref{table: moe model config and max lrs}.
\end{itemize}

{\bf Datasets.} Models are pre-trained on the following datasets:
\begin{itemize}[leftmargin=2em]
    \item{\bf Colossal Clean Crawled Corpus (C4)}~\citep{raffel2020exploring}. It is a large-scale public language dataset, widely used for LLM pre-training such as T5~\citep{raffel2020exploring}, and prior pre-training studies~\citep{zhao2024galore,zhao2024deconstructing}. We use the T5 tokenizer, with the vocabulary size 32100.
    \item {\bf OpenWebText}~\citep{Gokaslan2019OpenWeb}. It is an opensource recreation of the WebText corpus, is extensively utilized for LLM pre-training such as RoBERTa~\citep{liu2019roberta} and nanoGPT~\citep{Karpathy2022}. 
    Following~\citet{Karpathy2022,liu2023sophia}, we use the GPT-2 tokenizer, with the vocabulary size 50304.
\end{itemize}

{\bf LR schedulers.}
    We evaluate two popular LR scheduling strategies: 
    \begin{itemize}[leftmargin=2em]
    \item \texttt{cos} (cosine scheduler)~\citep{Karpathy2022,touvron2023llama}: a linear warm-up to peak \texttt{lr\_max}, followed by cosine decay to a terminal LR \texttt{lr\_min}.
    \item \texttt{wsd} (warmup-stable-decay scheduler)~\citep{zhai2022scaling,hu2024minicpm,hagele2024scaling}:  a linear warm-up LR to peak \texttt{lr\_max}, followed by a stable phase where LR remains at \texttt{lr\_max} (up to 80\% of the total training steps), and then a linear decay to \texttt{lr\_min}.
    \end{itemize}

All experiments are conducted on 8 A100 80G GPUs.


\begin{table}[!ht]
		\centering
        \renewcommand{\arraystretch}{1.25}
		\caption{\small Dense model configurations and optimally-tuned peak learning rates for AdamW.}
		\label{table: dense model config and max lrs}
		\begin{small}
		\begin{tabular}{l|c|c|c|c|c|c}
		\hline 
		Acronym & Size & $d_{\mathrm{model}}$ & $d_{\mathrm{FF}}$ & n$\_$head & depth &  \texttt{lr\_max} \\\hline\hline 
	LLaMA (66M) & 66M & 512  & 2048 & 8 & 8 & 1e-3 (on C4) \\
    LLaMA (0.2B) & 200M & 1024 & 4096 & 16 & 8 & 1e-3 (on C4) \\
	LLaMA (0.25B) & 237M & 1024 & 4096 & 16 & 8 & 8e-4 (on OpenWebText) \\
	LLaMA (0.4B) & 400M & 1280 & 5120 & 16 & 12 & 6e-4 (on C4) \\
	LLaMA (1B) & 1004M & 1600 & 6400 & 25 & 22 & 3e-4 (on C4) \\ 
    LLaMA (2B) & 1994M & 2048 & 8096 & 32 & 28 & 2e-4 (on C4) \\ 
	 \hline 
	\end{tabular}
	\end{small}
\end{table}

\begin{table}[!ht]
		\centering
        \renewcommand{\arraystretch}{1.25}
		\caption{\small MoE model configurations and optimally-tuned peak learning rates for AdamW on C4.}
		\label{table: moe model config and max lrs}
		\begin{small}
		\begin{tabular}{l|c|c|c|c|c|c|c|c}
		\hline 
		Acronym & Size & Activated Size & $d_{\mathrm{model}}$ & $d_{\mathrm{FF}}$ & n$\_$head & depth & n$\_$experts & \texttt{lr\_max} \\
		\hline\hline 
	Qwen2MoE (0.5B) & 502M & 247M &  768 & 3072 & 12 & 12 & 16 & 6e-4 \\
	Qwen2MoE (1B) & 1040M & 297M & 768 & 3072 & 12 & 15 & 32 & 3e-4 \\
	Qwen2MoE (2B) & 1945M & 536M & 1024 & 4096 & 16 & 16 & 32 & 2e-4 \\
	 \hline 
		\end{tabular}
		\end{small}
\end{table}

For the vision experiment, we used the standard 34 layer ResNet model~\citep{he2016deep} on the CIFAR-10 dataset~\citep{krizhevsky2009learning}. We use AdamW optimizer and the commonly used \texttt{cos} learning rate scheduler.

\subsection{Experimental details for Section~\ref{subsec: dense results} and~\ref{subsec: moe results}}

{\bf AdamW baselines.} 
We use the standard Adam optimizer (with decoupled weight decay) as the baseline in most experiments (except Section~\ref{subsec: blockwise lr and muon}). 
The baseline is configured with hyperparameters $\beta_1=0.9,\beta_2=0.95$, weight decay $\lambda=0.1$, and gradient clipping threshold of $1.0$, following protocols used in LLaMA pre-training~\citep{touvron2023llama}.
Following~\citet{hoffmann2022training}, the final learning rate \texttt{lr\_min} is set to $1/10$ of the peak learning rate \texttt{lr\_max}. Additionally, 
\begin{itemize}[leftmargin=2em]
    \item {\bf C4 pre-training.}  We follow the setup of \citet{zhao2024galore,chen2024fira,zhu2024apollo}, using a sequence length of 256 and batch size of 512.
    Following the Chinchilla scaling law~\citep{hoffmann2022training}, the total number of training tokens is set to be approximately 20 times the number of model parameters. The training includes 1,000 warm-up steps.
    The grid search for \texttt{lr\_max} is performed over $\{$\texttt{1e-4}, \texttt{2e-4}, \texttt{3e-4}, \texttt{6e-4}, \texttt{1e-3}, \texttt{1.5e-3}$\}$.
    Optimal learning rates for each model are detailed in Tables~\ref{table: dense model config and max lrs} and~\ref{table: moe model config and max lrs}.

    \item {\bf OpenWebText pre-training.} The (max) sequence length is set to 1024, and the batch size is set to 480, following nanoGPT~\citep{Karpathy2022} and~\citet{liu2023sophia}. The total training duration is 50,000 or 100,000 steps, including 1,000 warm-up steps.
    The grid search for \texttt{lr\_max} is performed over $\{$\texttt{2e-4}, \texttt{4e-4}, \texttt{6e-4}, \texttt{8e-4}, \texttt{1e-3}$\}$. Optimal learning rates for each model are detailed in Table~\ref{table: dense model config and max lrs}.
\end{itemize}

{\bf AdamWPower experiments.} 
We adopt $p=1.2$ as the default in all experiments in Section~\ref{subsec: dense results} and~\ref{subsec: moe results}. All other optimizer hyperparameters are kept identical to those used for the AdamW baselines.
Importantly, the power $p=1.2$ proves to be {\bf highly robust}.

\subsection{Experimental details for Section~\ref{subsec: blockwise lr and muon}}

{\bf AdamW with Blockwise LR.}
Following~\citet{wang2025sharpness}, we adopt the same peak \texttt{lr\_max} tuned for AdamW as the \texttt{lr\_max} of AdamW with Blockwise LR. 
For the blockwise lr ratios, we adopt the recommended $r({\rm Embed})=10, r({\rm QK})=8, r({\rm FFN})=6, r({\rm VO})=4$ in~\citet{wang2025sharpness}.

{\bf AdamWPower with Blockwise LR.} 
We still adopt $p=1.2$ in the AdamWPower with Blockwise LR.
All other optimizer hyperparameters are kept identical to those used for the AdamW with Blockwise LR.

{\bf Muon baseline.}
We use the same techniques for Muon as~\citet{liu2025muon}: (1) adding weight decay (2)
adjusting the per-parameter update scale. These techniques allow our Muon experiment to use the identical learning rate as the AdamW baseline without the extra effort of hyper-parameter tuning.

{\bf MuonPower.} 
We still adopt $p=1.2$ in the MuonPower.
All other optimizer hyperparameters are kept identical to those used for the Muon baseline.

\subsection{Experimental details for Section~\ref{subsec: experiment: batch size}}

We conduct experiments using LLaMA (0.2B) on C4 dataset with \texttt{wsd} scheduler.
Unlike the previous experimental settings, here we vary the batch size from the standard 512 up to 8192.

For batch size 512, the tuned \texttt{max\_lr} is \texttt{1e-3} (Table~\ref{table: dense model config and max lrs}).
For larger batch sizes (2048, 4096, 8192), we tune the \texttt{max\_lr} over $\{$\texttt{6e-4, 1e-3, 2e-3, 4e-3, 8e-3}$\}$ for AdamW. We find that \texttt{1e-3} consistently yields the best results across all batch sizes.

For each batch size, we evaluate AdamWPower with multiple values of $p$, and record their validation loss when the optimal validation loss reaches approximately 3.5.

We also conduct vision experiments using ResNet-34 on CIFAR-10 datset with \texttt{cos} scheduler. We tune the \texttt{max\_lr} over $\{$\texttt{6.25e-5, 1.25e-4, 2.5e-4, 5e-4, 1e-3}$\}$. For batch size 32, 64, and 128, the tuned \texttt{max\_lr} is \texttt{1.25e-4}, \texttt{2.5e-4}, \texttt{5e-4}, respectively.

\subsection{Additional experiments with multiple random seeds}

In this subsection, we reproduce a subset of experiments in Figure~\ref{fig: llama on c4, full} with multiple random seeds to assess statistical robustness. Specifically, we rerun the experiments six times with different random seeds and report both mean and standard deviation as shown in Figure~\ref{fig: llama on c4, multiple seed}. The shaded regions in the plots denote the standard deviation, showing the statistical significance of each method. These results confirm that the observed performance differences are consistent and cannot be explained by random seed variability.

\begin{figure}[!htb]
    \centering
    \includegraphics[width=0.3\linewidth]{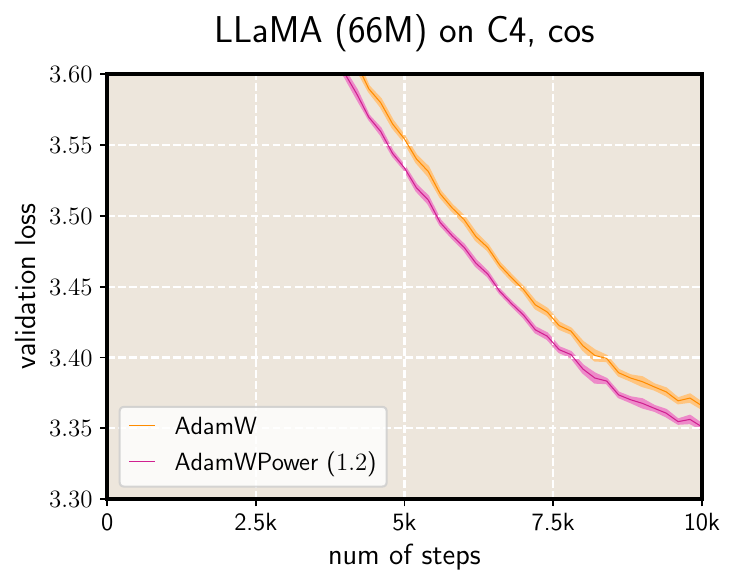}
    \includegraphics[width=0.3\linewidth]{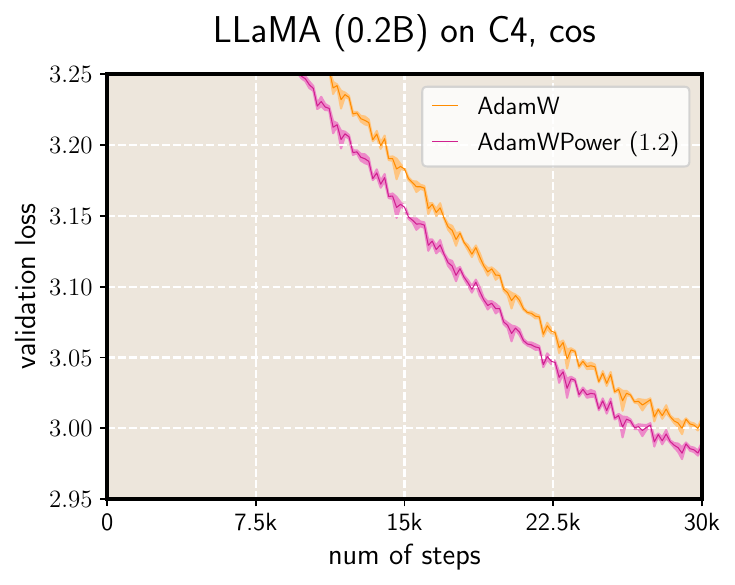}
    \vspace{-.3cm}
    \caption{AdamWPower ($p=1.2$) consistently outperforms AdamW in LLaMA pre-training tasks. The shaded regions in the plots denote the standard deviation.}
    \label{fig: llama on c4, multiple seed}
\end{figure}

\subsection{Robustness to $(\beta_1,\beta_2)$ tuning}

We further examine whether the advantage of AdamWPower persists after tuning the momentum parameters $(\beta_1,\beta_2)$ in AdamW. While the main experiments use the standard setting $(\beta_1,\beta_2)=(0.9,0.95)$, we additionally sweep $(\beta_1,\beta_2)$ for both AdamW and AdamWPower using LLaMA (0.25B) on OpenWebText. Specifically, we consider $(\beta_1,\beta_2)$ configurations following the spirit of \citet{orvieto2026search}:

\[
\begin{aligned}
(\beta_1,\beta_2)\in\{&
(0.9,0.8),\ (0.9,0.9),\ (0.9,0.95),\ (0.9,0.975),\\
&
(0.95,0.9),\ (0.95,0.95),\ (0.95,0.975),\ (0.95,0.9875),\\
&
(0.975,0.95),\ (0.975,0.975),\ (0.975,0.9875),\ (0.975,0.99375)
\}.
\end{aligned}
\]
The best AdamW baseline is obtained with $(\beta_1,\beta_2)=(0.95,0.9875)$, achieving a final validation loss of $2.832831$, whereas AdamWPower with $p=1.2$ achieves its best result with $(\beta_1,\beta_2)=(0.9,0.8)$ and a lower final validation loss of $2.828666$. These results show that AdamWPower continues to outperform the tuned AdamW baseline even after tuning $\beta_1$ and $\beta_2$, suggesting that the improvement is not merely due to a particular choice of momentum parameters.

\section{Proofs in Section~\ref{sec: motivation} and~\ref{subsec: case study}}
\label{appendix: proof of case study}

\subsection{Support for the motivation in Section~\ref{sec: motivation}}

In this section, we provide a detailed justification for the claim in Section~\ref{sec: motivation} that the \textbf{linear transformation} ($\varphi(z)=cz$ with $c\in\bbR$) {\bf fails} to alter the updates of popular optimizers used in LLM pretraining.
Without loss of generality, we analyze the one-dimensional case.

\begin{itemize}[leftmargin=2em]
    \item  {\bf Adaptive optimizers}, including Adagrad, RMSprop, and Adam. 
    These optimizers adjust the learning rate based on a moving average of gradients. 
    In practice, the term $\epsilon$ (used to ensure numerical stability) is typically set to an extremely small value (e.g., \texttt{1e-8, 1e-12}), 
    Consider the update rule of Adam in the limit $\epsilon \to 0$:
    \begin{align*}
        \theta_{t+1}=(1-\lambda\eta_t)\theta_t-\eta_t\frac{\EMA_{\beta_1}(\{g_s\}_1^t)}{\sqrt{\EMA_{\beta_2}(\{g_s^2\}_1^t)}}.
    \end{align*}
    Applying a linear transformation $\varphi(z)=c z$  with $c>0$ does not change the ratio:
    \begin{align*}
    \frac{\EMA_{\beta_1}(\{g_s\}_1^t)}{\sqrt{\EMA_{\beta_2}(\{g_s^2\}_1^t)}}=\frac{\EMA_{\beta_1}(\{\varphi(g_s)\}_1^t)}{\sqrt{\EMA_{\beta_2}(\{\varphi(g_s)^2\}_1^t)}}.
    \end{align*}
    Hence, the dynamics remains unchanged. This argument applies similarly to Adagrad and RMSprop. 

    \item {\bf Sign-based methods}, including
    Sign momentum~\citep{bernstein2018signsgd} and Lion~\citep{chen2024symbolic}.
    These methods operate on the sign of the  moving average gradients. 
    For instance, Signed Momentum (with decoupled weight decay) follows:
    \begin{align*}
    \theta_{t+1}=(1-\lambda\eta_t)\theta_t-\eta_t\sign(\EMA_{\beta}(\{g_s\}_1^t)).
    \end{align*}
    Again, applying a linear transformation $\varphi(z) = c z$ with $c>0$ does not change the sign of the averaged gradient, since:
    \begin{align*}
    \sign(\EMA_{\beta}(\{g_s\}_1^t))=\sign(\EMA_{\beta}(\{\varphi(g_s)\}_1^t)).
    \end{align*}
    Hence, the dynamics remains unchanged. This argument applies similarly to Lion. 
\end{itemize}

In contrast, our proposed (nonlinear) GradPower transformation ($\varphi_p(z)= z^p:=|z|^p\sign(z)$ with $p>0$) \emph{does} alter the updates of both adaptive and sign-based optimizers, when the gradients $g_s$ are not all of the same sign.

\subsection{Proof of Propositions~\ref{prop: low-noise regime} and~\ref{prop: high-noise regime}}


\begin{gather*}
    \bbE[\varphi_p(g)]=\frac{(\mu+\sigma)^{p+1}-|\mu-\sigma|^{p+1}}{2\sigma{(p+1)}},\\
    \bbE[\varphi_p^2(g)]=\frac{(\mu+\sigma)^{2p+1}-|\mu-\sigma|^{2p+1}\sign(\mu-\sigma)}{2\sigma{(2p+1)}}.
\end{gather*}

\underline{The low-noise regime.} ($0\ll\sigma\ll\mu\ll1$)

It is straightforward that
\begin{align*}
    \bbE[\varphi_p(g)]=&\frac{\mu^{p+1}}{2\sigma(p+1)}\left(\left(1+\frac{\sigma}{\mu}\right)^{p+1}-\left(1-\frac{\sigma}{\mu}\right)^{p+1}\right)
    \\=&\frac{\mu^{p+1}}{2\sigma(p+1)}\left(\frac{2(p+1)\sigma}{\mu}+o\left(\frac{\sigma}{\mu}\right)\right)=\mu^p\left(1+o(1)\right);
\end{align*}

\begin{align*}
    \bbE[\varphi_p^2(g)]=&\frac{\mu^{2p+1}}{2\sigma(2p+1)}\left(\left(1+\frac{\sigma}{\mu}\right)^{2p+1}-\left(1-\frac{\sigma}{\mu}\right)^{2p+1}\right)
    \\=&\frac{\mu^{2p+1}}{2\sigma(2p+1)}\left(\frac{2(2p+1)\sigma}{\mu}+o\left(\frac{\sigma}{\mu}\right)\right)=\mu^{2p}\left(1+o(1)\right).
\end{align*}

Therefore, we have
\begin{align*}
    u=\frac{\bbE[\varphi_p(g)]}{\sqrt{\bbE[\varphi_p^2(g)]}+\epsilon}=\frac{\mu^p\left(1+o(1)\right)}{\mu^p\left(1+o(1)\right)+\epsilon}=\frac{1+o(1)}{1+\frac{\epsilon}{\mu^p}}.
\end{align*}

\underline{The high-noise regime.} ($0\ll\mu\ll\sigma\ll1$) 

It is straightforward that
\begin{align*}
    \bbE[\varphi_p(g)]=&\frac{\sigma^{p+1}}{2\sigma(p+1)}\left(\left(1+\frac{\mu}{\sigma}\right)^{p+1}-\left(1-\frac{\mu}{\sigma}\right)^{p+1}\right)
    \\=&
    \frac{\sigma^p}{2(p+1)}\left(\frac{2(p+1)\mu}{\sigma}+o\left(\frac{\mu}{\sigma}\right)\right)
    =\sigma^{p-1}\mu (1+o(1));
\end{align*}

\begin{align*}
    \bbE[\varphi_p^2(g)]=&\frac{\sigma^{2p+1}}{2\sigma(2p+1)}\left(\left(1+\frac{\mu}{\sigma}\right)^{2p+1}+\left(1-\frac{\mu}{\sigma}\right)^{2p+1}\right)
    \\=&
    \frac{\sigma^{2p+1}}{2\sigma(2p+1)}\left(2+o\left(\frac{\mu}{\sigma}\right)\right)=\frac{\sigma^{2p}}{2p+1}(1+o(1)).
\end{align*}

Therefore, we have
\begin{align*}
    u=\frac{\bbE[\varphi_p(g)]}{\sqrt{\bbE[\varphi_p^2(g)]}+\epsilon}
    =\frac{\sigma^{p-1}\mu(1+o(1))}{\frac{\sigma^p}{\sqrt{2p+1}}(1+o(1))+\epsilon}=\frac{\mu}{\sigma}\frac{1+o(1)}{\frac{1+o(1)}{\sqrt{2p+1}}+\frac{\epsilon}{\sigma^p}}=\frac{\mu}{\sigma}\frac{1+o(1)}{\frac{1}{\sqrt{2p+1}}+\frac{\epsilon}{\sigma^p}}.
\end{align*}

To study the monotonicity of $\tilde{u}=\frac{\mu}{\sigma}\frac{1}{\frac{1}{\sqrt{2p+1}}+\frac{\epsilon}{\sigma^p}}$, we only need to study the monotonicity of
\begin{align*}
    \psi(p)=\frac{1}{\sqrt{2p+1}}+\frac{\epsilon}{\sigma^p}.
\end{align*}

It is clear that
\begin{align*}
    \psi'(p)=\frac{\epsilon\log(1/\sigma)}{\sigma^p}-\frac{1}{(2p+1)^{3/2}}.
\end{align*}

Due to $\epsilon\log(1/\sigma)<1$, there exists a $p^\star$, such that $\psi'(p)<0$ for all $0<p<p^\star$; $\psi'(p)>0$ for all $p>p^\star$. Here, $p^\star$ is the solution of the equation:
\begin{align*}
    \frac{\sigma^p}{(2p+1)^{3/2}}=\epsilon\log(1/\sigma)
\end{align*}

Noticing the relationship between $\psi$ and $\tilde{u}$, we have: $\tilde{u}$ increases when $0<p<p^\star$; $\tilde{u}$ decreases when $p>p^\star$.

Now we estimate $p^\star$. Due to $1+x\leq e^x$, we have $(2p+1)^{3/2}\leq(e^{2p})^{3/2}=(e^3)^p$. Then we obtain the two-sides estimate $1\leq(2p+1)^{3/2}\leq(e^3)^p$, implying
\begin{align*}
    \left(\frac{\sigma}{e^3}\right)^p\leq\frac{\sigma^p}{(2p+1)^{3/2}}\leq\sigma^p.
\end{align*}

Therefore, we have the estimate:
\begin{align*}
    \frac{\log(\epsilon\log(1/\sigma))}{\log(\sigma/e^3)}\leq p^\star\leq\frac{\log(\epsilon\log(1/\sigma))}{\log\sigma}
\end{align*}

Noticing $\sigma\ll1$, we obtain:
\begin{align*}
    p^\star=\Theta\left
    (\frac{\log(\epsilon\log(1/\sigma))}{\log\sigma}\right).
\end{align*}



\section{Proofs in Section~\ref{subsec: theory}}
\label{appendix: proof of theory}

Recall that the udpate rule of AdagradPower (with power $p$) follows:
\begin{align*}
    \btheta_{t+1}=&\btheta_t-\eta\bu_t,\\
    \bu_t=&\frac{\varphi_p(\bg_t)}{\sqrt{\bv_t+\epsilon}},\\
    \bv_t=&\sum_{s=1}^t \varphi_p^2(\bg_t).
\end{align*}

In general, our proof is inspired by the main techniques to prove Adagrad used in~\citet{defossez2020simple}. The key difference is to establish a similar estimate of the loss descent for Adamgradpower. This generalize is not trivial, need to use the structure of the high-noise fact.

In the proof, we need an auxiliary sequence, defined as:
\begin{align*}
    \tilde{\bv}_t=\bv_{t-1}+\bbE_t[\varphi_p^2(\bg_t)].
\end{align*}

\subsection{Key Lemmas}

We need two important lemmas in the proof of each Theorem.
The first develops the lower bound of the descent value for the update.

\begin{lemma}[Descent estimate for the update, high-noise regime]\label{lemma: descent lemma}
Under Assumption~\ref{ass: convergence}, for all $t\in\bbN$, and $i\in[d]$ and any $\sigma > 0$, we have:
\begin{align*}
    \bbE_{t}\left[\nabla_i\cL(\btheta) u_{t,i}\right]
    =\bbE_{t}\left[\frac{\nabla_i\cL(\btheta)\varphi_p(g_{t,i})}{\sqrt{v_{t,i}+\epsilon}}\right]\geq
    \frac{\bbE_{t}\left[\nabla_i\cL(\btheta)\varphi_p(g_{t,i})\right]}{\sqrt{\tilde{v}_{t,i}+\epsilon}}
    -\frac{\sigma}{2}\frac{|\nabla_i\cL(\btheta)|^2}{\sqrt{\tilde{v}_{t,i}+\epsilon}}-\frac{2R^p}{\sigma}\bbE\left[\frac{\varphi_p^2(g_{t,i})}{v_{t,i}+\epsilon}\right].
\end{align*}
\end{lemma}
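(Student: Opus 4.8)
\textbf{Proof plan for Lemma~\ref{lemma: descent lemma}.}

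The plan is to follow the decomposition strategy used in \citet{defossez2020simple}, but adapted to the powered gradient $\varphi_p(g_{t,i})$ and made noise-parameter-agnostic (the bound must hold for any $\sigma>0$, so one cannot invoke Assumption~\ref{ass: high-noise regime} here). Throughout, fix $t\in\bbN$ and $i\in[d]$, and abbreviate $G=\nabla_i\cL(\btheta_t)$, $g=g_{t,i}$, $v=v_{t,i}$, $\tilde v=\tilde v_{t,i}$, noting that $\tilde v=v_{t-1,i}+\bbE_t[\varphi_p^2(g)]$ is $\cF_{t-1}$-measurable whereas $v=\tilde v-\bbE_t[\varphi_p^2(g)]+\varphi_p^2(g)$. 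The first step is the exact identity
\begin{equation*}
\bbE_t\!\left[\frac{G\,\varphi_p(g)}{\sqrt{v+\epsilon}}\right]
=\bbE_t\!\left[\frac{G\,\varphi_p(g)}{\sqrt{\tilde v+\epsilon}}\right]
+\bbE_t\!\left[G\,\varphi_p(g)\left(\frac{1}{\sqrt{v+\epsilon}}-\frac{1}{\sqrt{\tilde v+\epsilon}}\right)\right],
\end{equation*}
where the first term is already the target main term $\bbE_t[G\,\varphi_p(g)]/\sqrt{\tilde v+\epsilon}$ since $\tilde v$ is deterministic given $\cF_{t-1}$. It remains to lower-bound (i.e., control the absolute value of) the correction term $I$.

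Next I would rationalize the difference of reciprocal square roots: $\frac{1}{\sqrt{v+\epsilon}}-\frac{1}{\sqrt{\tilde v+\epsilon}}=\frac{\tilde v-v}{\sqrt{v+\epsilon}\sqrt{\tilde v+\epsilon}(\sqrt{v+\epsilon}+\sqrt{\tilde v+\epsilon})}$, and use $|\tilde v-v|=|\bbE_t[\varphi_p^2(g)]-\varphi_p^2(g)|\le \bbE_t[\varphi_p^2(g)]+\varphi_p^2(g)$. Splitting this into the two summands and bounding each occurrence of $\sqrt{v+\epsilon}+\sqrt{\tilde v+\epsilon}$ from below by whichever single term is convenient yields $|I|\le I_1+I_2$ with
\begin{equation*}
I_1=\bbE_t\!\left[|G\,\varphi_p(g)|\,\frac{\bbE_t[\varphi_p^2(g)]}{\sqrt{v+\epsilon}\,(\tilde v+\epsilon)}\right],\qquad
I_2=\bbE_t\!\left[|G\,\varphi_p(g)|\,\frac{\varphi_p^2(g)}{(v+\epsilon)\sqrt{\tilde v+\epsilon}}\right].
\end{equation*}
Each of these is then handled by Young's inequality $|xy|\le\frac{\lambda}{2}x^2+\frac{1}{2\lambda}y^2$ with a carefully chosen split: for $I_1$, take $x=|G|/(\tilde v+\epsilon)^{1/4}$ and absorb the rest into $y$ with $\lambda\propto\sigma\sqrt{\tilde v+\epsilon}$; for $I_2$, a similar choice with an extra factor $\varphi_p^2(g)/\bbE_t[\varphi_p^2(g)]$ placed so that the resulting ``$x^2$'' term still integrates to $|G|^2/\sqrt{\tilde v+\epsilon}$. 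In both cases one uses the two elementary facts $\varphi_p^2(g)\le v+\epsilon$ (so that $I_2$'s leftover is linear in $\varphi_p^2(g)/(v+\epsilon)$) and $\bbE_t[\varphi_p^2(g)]\le\tilde v+\epsilon$ together with $\bbE_t[\varphi_p^2(g)]\le R^{2p}$ (from Assumption~\ref{ass: convergence}, since $\|g\|_\infty\le R$ a.s., hence $|\varphi_p(g)|=|g|^p\le R^p$). After taking the remaining conditional expectation, each $I_j$ is bounded by $\frac{\sigma}{4}\frac{|G|^2}{\sqrt{\tilde v+\epsilon}}+\frac{R^p}{\sigma}\bbE_t\!\big[\frac{\varphi_p^2(g)}{v+\epsilon}\big]$, and summing gives exactly the stated $\frac{\sigma}{2}\frac{|G|^2}{\sqrt{\tilde v+\epsilon}}+\frac{2R^p}{\sigma}\bbE_t\!\big[\frac{\varphi_p^2(g)}{v+\epsilon}\big]$.

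The main obstacle is bookkeeping: choosing the Young-inequality weights $\lambda$ in $I_1$ and $I_2$ so that the quadratic-in-$G$ pieces collapse to a clean multiple of $|G|^2/\sqrt{\tilde v+\epsilon}$ while the residual pieces reduce—using only $\varphi_p^2(g)\le v+\epsilon$, $\bbE_t[\varphi_p^2(g)]\le\tilde v+\epsilon$, and $\bbE_t[\varphi_p^2(g)]\le R^{2p}$—to the single term $\bbE_t[\varphi_p^2(g)/(v+\epsilon)]$ without any stray powers of $\tilde v+\epsilon$ surviving in the denominator. The only structural change relative to the $p=1$ Adagrad argument is replacing $g$ by $\varphi_p(g)$ throughout and replacing the gradient bound $R$ by $R^p$ wherever $\|\varphi_p(g)\|_\infty$ appears; the inequality $\varphi_p^2(g)\le v_{t,i}+\epsilon$ holds because $v_{t,i}=\sum_{s=1}^t\varphi_p^2(g_s)\ge\varphi_p^2(g_t)$, so no new assumption is needed. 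Notably this lemma is \emph{not} where the high-noise hypothesis enters; it is deployed later, when this descent estimate is summed over $t$ and combined with (C1)--(C2) to turn the main term $\bbE_t[G\varphi_p(g)]/\sqrt{\tilde v+\epsilon}$ into something proportional to $|G|^2$.
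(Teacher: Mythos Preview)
Your proposal is correct and follows essentially the same route as the paper: the identical decomposition into a main term plus a correction $I$, the same rationalization and triangle-inequality split into $I_1,I_2$, and the same Young-inequality treatment using the three facts $\varphi_p^2(g)\le v+\epsilon$, $\bbE_t[\varphi_p^2(g)]\le\tilde v+\epsilon$, and $\bbE_t[\varphi_p^2(g)]\le R^{2p}$. One small bookkeeping slip: your stated pair $x=|G|/(\tilde v+\epsilon)^{1/4}$ with $\lambda\propto\sigma\sqrt{\tilde v+\epsilon}$ is internally inconsistent; the paper takes $x=|G|/\sqrt{\tilde v+\epsilon}$ with $\lambda=\tfrac{\sigma}{2}\sqrt{\tilde v+\epsilon}$ for $I_1$ and $\lambda=\tfrac{\sigma}{2}\varphi_p^2(g)/\bbE_t[\varphi_p^2(g)]$ for $I_2$, which lands exactly on the $\tfrac{\sigma}{4}|G|^2/\sqrt{\tilde v+\epsilon}$ pieces you claim.
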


\begin{proof}[Proof of Lemma~\ref{lemma: descent lemma}]\ \\
Let $t\in\bbN$ and $i\in[d]$.
For simplicity, we use the following notations in the proof:
\begin{align*}
    G=\nabla_i\cL(\btheta),\ g=g_{t,i},\ v=v_{t,i},\ \tilde{v}=\tilde{v}_{t,i}.
\end{align*}

First, we decouple the descent quantity as:
\begin{equation}\label{equ: proof of lemma, decouple}
    \bbE_t\left[\frac{G\varphi_p(g)}{\sqrt{v+\epsilon}}\right]=
    \bbE_t\left[\frac{G\varphi_p(g)}{\sqrt{\tilde{v}+\epsilon}}\right]+\bbE_t\Bigg[\underbrace{G\varphi_p(g)\left(\frac{1}{\sqrt{v+\epsilon}}-\frac{1}{\sqrt{\tilde{v}+\epsilon}}\right)}_{I}\Bigg]
\end{equation}

Then we bound the term $I$ in the RHS of Equation~\eqref{equ: proof of lemma, decouple}:
\begin{align*}
    |I|=&|G\varphi_p(g)|\frac{|\tilde{v}-v|}{\sqrt{v+\epsilon}\sqrt{\tilde{v}+\epsilon}(\sqrt{v+\epsilon}+\sqrt{\tilde{v}+\epsilon})}
    \\=&
    |G\varphi_p(g)|\frac{|\bbE_t[\varphi_p^2(g)]-\varphi_p^2(g)|}{\sqrt{v+\epsilon}\sqrt{\tilde{v}+\epsilon}(\sqrt{v+\epsilon}+\sqrt{\tilde{v}+\epsilon})}
    \\\leq&
    |G\varphi_p(g)|\frac{\bbE_t[\varphi_p^2(g)]+\varphi_p^2(g)}{\sqrt{v+\epsilon}\sqrt{\tilde{v}+\epsilon}(\sqrt{v+\epsilon}+\sqrt{\tilde{v}+\epsilon})}
    \\\leq&
    \underbrace{|G\varphi_p(g)|\frac{\bbE_t[\varphi_p^2(g)]}{\sqrt{v+\epsilon}(\tilde{v}+\epsilon)}}_{I_1}+\underbrace{|G\varphi_p(g)|\frac{\varphi_p^2(g)}{(v+\epsilon)\sqrt{\tilde{v}+\epsilon}}}_{I_2}.
\end{align*}

Consequently, we will estimate $I_1$ and $I_2$ by the inequality 
\begin{align*}
    |xy|\leq\frac{\lambda x^2}{2}+\frac{y^2}{2\lambda}.
\end{align*}

For $I_1$, by taking
\begin{align*}
    |x|=\frac{|G|}{\sqrt{\tilde{v}+\epsilon}},\  
    |y|=\frac{|\varphi_p(g)|\bbE_t[\varphi_p^2(g)]}{\sqrt{v+\epsilon}\sqrt{    \tilde{v}+\epsilon}},\ 
    \lambda=\frac{\sigma \sqrt{\tilde{v}+\epsilon}}{2},
\end{align*}

we obtain
\begin{align*}
    I_1\leq&\frac{\sigma}{4}\frac{|G|^2}{\sqrt{\tilde{v}+\epsilon}}+\frac{1}{\sigma}\frac{(\varphi_p^2(g)(\bbE_t[\varphi_p^2(g)])^2}{(v+\epsilon)(\tilde{v}+\epsilon)^{3/2}},
    \\
    \bbE_t[I_1]\leq&
    \frac{\sigma}{4}\frac{|G|^2}{\sqrt{\tilde{v}+\epsilon}}+\frac{1}{\sigma}\frac{(\bbE_t[\varphi_p^2(g)])^2}{(\tilde{v}+\epsilon)^{3/2}}\bbE_t\left[\frac{\varphi_p^2(g)}{v+\epsilon}\right].
\end{align*}

Given that $\sqrt{\bbE_t[\varphi_p^2(g)]}\leq\sqrt{\tilde{v}+\epsilon}$ and $\sqrt{\bbE_{t}[\varphi_p^2(g)]}\leq R^p$, we can simplify the above estimate as:
\begin{align*}
    \bbE_t[I_1]\leq
    \frac{\sigma}{4}\frac{|G|^2}{\sqrt{\tilde{v}+\epsilon}}+\frac{R^p}{\sigma}
    \bbE_t\left[\frac{\varphi_p^2(g)}{v+\epsilon}\right].
\end{align*}

For $I_2$, by taking
\begin{align*}
    |x|=\frac{|G|}{\sqrt{\tilde{v}+\epsilon}},\  
    |y|=\frac{|\varphi_p(g)|\varphi_p^2(g)}{v+\epsilon},\ 
    \lambda=\frac{\sigma\varphi_p^2(g)}{2\bbE_t[\varphi_p^2(g)]},
\end{align*}

we obtain
\begin{align*}
    I_2\leq\frac{\sigma}{4}\frac{\varphi_p^2(g)}{\bbE_t[\varphi_p^2(g)]}\frac{|G|^2}{\sqrt{\tilde{v}+\epsilon}}+\frac{1}{\sigma}\frac{\bbE_t[\varphi_p^2(g)]}{\sqrt{\tilde{v}+\epsilon}}\frac{\varphi_p^4(g)}{(v+\epsilon)^2}
\end{align*}

Given that $\varphi_p^2(g) \leq v+\epsilon$, we can simplify the above estimate as:
\begin{align*}
    I_2\leq\frac{\sigma}{4}\frac{\varphi_p^2(g)}{\bbE_t[\varphi_p^2(g)]}\frac{|G|^2}{\sqrt{\tilde{v}+\epsilon}}+\frac{1}{\sigma}\frac{\bbE_t[\varphi_p^2(g)]}{\sqrt{\tilde{v}+\epsilon}}\frac{\varphi_p^2(g)}{v+\epsilon}.
\end{align*}

Using $\sqrt{\bbE_t[\varphi_p^2(g)]}\leq\sqrt{\tilde{v}+\epsilon}$, $\sqrt{\bbE_{t}[\varphi_p^2(g)]}\leq R^p$, and taking the conditional expectation, we obtain:
\begin{align*}
    \bbE_t[I_2]\leq
    \frac{\sigma}{4}\frac{|G|^2}{\sqrt{\tilde{v}+\epsilon}}+\frac{R^p}{\sigma}\bbE\left[\frac{\varphi_p^2(g)}{v+\epsilon}\right].
\end{align*}

Consequently, combing the two estimates of $I_1$ and $I_2$, we obtain:
\begin{align*}
    \bbE_t[|I|]\leq\bbE_t[I_1]+\bbE_t[I_2]\leq
    \frac{\sigma}{2}\frac{|G|^2}{\sqrt{\tilde{v}+\epsilon}}+\frac{2R^p}{\sigma}\bbE\left[\frac{\varphi_p^2(g)}{v+\epsilon}\right].
\end{align*}

Putting the above estimate into Equation~\eqref{equ: proof of lemma, decouple}, we obtain the lower bound:
\begin{align*}
    \bbE_t\left[\frac{G\varphi_p(g)}{\sqrt{v+\epsilon}}\right]=
    &\bbE_t\left[\frac{G\varphi_p(g)}{\sqrt{\tilde{v}+\epsilon}}\right]+\bbE_t\left[I\right]\geq\bbE_t\left[\frac{G\varphi_p(g)}{\sqrt{\tilde{v}+\epsilon}}\right]-\bbE_t[|I|]
    \\\geq&
    \frac{\bbE_t\left[ G\varphi_p(g)\right]}{\sqrt{\tilde{v}+\epsilon}} - \frac{\sigma}{2}\frac{|G|^2}{\sqrt{\tilde{v}+\epsilon}}-\frac{2R^p}{\sigma}\bbE_t\left[\frac{\varphi_p^2(g)}{v+\epsilon}\right].
\end{align*}
    
\end{proof}

The second lemma estimate the sum of the updates in adaptive methods.

\begin{lemma}[Lemma 5.2 in~\citet{defossez2020simple}]\label{lemma: sum of the updates}
    Let $\{a_t\}_{t\in\bbN}$ be a non-negative sequence, $\epsilon>0$. Then for all $T\in\bbN$, we have:
    \begin{align*}
        \sum_{t=1}^T\frac{a_t}{\epsilon+\sum_{s=1}^t a_s}\leq
        \log\left(1+\frac{1}{\epsilon}\sum_{t=1}^T a_t\right).
    \end{align*}
\end{lemma}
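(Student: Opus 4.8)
The plan is to reduce the claim to a telescoping sum via a single elementary inequality for the logarithm. First I would introduce the partial sums $b_t := \sum_{s=1}^t a_s$ with the convention $b_0 = 0$, so that $a_t = b_t - b_{t-1}$ and the $t$-th term on the left-hand side becomes $\frac{b_t - b_{t-1}}{\epsilon + b_t}$. Since $\epsilon > 0$ and the sequence $(a_t)$ is non-negative, both $\epsilon + b_{t-1}$ and $\epsilon + b_t$ are strictly positive and satisfy $\epsilon + b_{t-1} \leq \epsilon + b_t$, which is exactly the regime where the key inequality below applies.

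The key step is the pointwise bound $\frac{x - y}{x} \leq \log x - \log y$, valid for all $0 < y \leq x$; this is just a restatement of the standard estimate $\log z \geq 1 - 1/z$ (applied with $z = x/y \geq 1$), or equivalently concavity of $\log$ read as a tangent/chord comparison. Applying it with $x = \epsilon + b_t$ and $y = \epsilon + b_{t-1}$, and using $x - y = a_t$, gives
\[
\frac{a_t}{\epsilon + b_t} \;\leq\; \log(\epsilon + b_t) - \log(\epsilon + b_{t-1}) \qquad \text{for every } t,
\]
where in the degenerate case $a_t = 0$ (so consecutive partial sums coincide) this reads $0 \leq 0$ and hence holds unconditionally. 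Summing over $t = 1, \dots, T$, the right-hand side telescopes to $\log(\epsilon + b_T) - \log(\epsilon + b_0) = \log\!\big(\epsilon + \sum_{t=1}^T a_t\big) - \log \epsilon = \log\!\big(1 + \tfrac{1}{\epsilon}\sum_{t=1}^T a_t\big)$, which is the asserted inequality.

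I do not expect a genuine obstacle here: the argument is elementary, self-contained, and does not require any structural assumption beyond non-negativity of $(a_t)$ and $\epsilon > 0$. The only points deserving a line of care are the boundary cases — when some $a_t$ vanish the per-term bound is still valid (with equality), and when all $a_t = 0$ both sides are zero — so the final inequality holds for all $T \in \mathbb{N}$ without exception.
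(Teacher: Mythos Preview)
Your argument is correct and is the standard proof of this inequality: introduce the partial sums, apply the concavity bound $\frac{x-y}{x}\leq \log x - \log y$ termwise, and telescope. The paper does not actually prove this lemma at all --- it simply quotes it as Lemma~5.2 of \citet{defossez2020simple} and uses it as a black box in the proofs of Theorems~\ref{thm: adagradpower, low-noise} and~\ref{thm: adagradpower, high-noise} --- so there is nothing to compare against, and your self-contained derivation is a fine addition.
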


\subsection{Proof of Theorem~\ref{thm: adagradpower, low-noise}}

With the help of the above Lemma~\ref{lemma: descent lemma} and~\ref{lemma: sum of the updates}, we can prove Theorem~\ref{thm: adagradpower, low-noise}.

\begin{proof}[Proof of Theorem~\ref{thm: adagradpower, low-noise}]\ \\
Due to the $H$-smoothness, we have the quadratic upper bound: 
\begin{align*}
    \cL(\btheta_{t+1})\leq\cL(\btheta_{t})-\eta\<\nabla\cL(\btheta_t),\bu_t\>+\frac{\eta^2H}{2}\left\|\bu_t\right\|_2^2.
\end{align*}

Taking the expectation at $t$, we have:
\begin{align*}
\bbE_t\left[\cL(\btheta_{t+1})\right]
\leq&\cL(\btheta_{t})-\eta\bbE_t\left[\<\nabla\cL(\btheta_t),\bu_t\>\right]+\frac{\eta^2H}{2}\bbE_t\left[\left\|\bu_t\right\|_2^2\right]
\\=&
\cL(\btheta_{t})-\eta\sum_{i=1}^d\bbE_t\left[\nabla_i\cL(\btheta_t)u_{t,i}\right]+\sum_{i=1}^d\frac{\eta^2H}{2}\bbE_t\left[u_{t,i}^2\right].
\end{align*}

Combine Lemma~\ref{lemma: descent lemma} with $\sigma =c$ and Assumption~\ref{ass: low-noise regime}, we get

\begin{align*}
    \bbE_{t}\left[\nabla_i\cL(\btheta) u_{t,i}\right]
    =\bbE_{t}\left[\frac{\nabla_i\cL(\btheta)\varphi_p(g_{t,i})}{\sqrt{v_{t,i}+\epsilon}}\right] & \geq
    c\frac{|\nabla_i\cL(\btheta)|^{p+1}}{\sqrt{\tilde{v}_{t,i}+\epsilon}} - \frac{c}{2}\frac{|\nabla_i\cL(\btheta)|^2}{\sqrt{\tilde{v}_{t,i}+\epsilon}}-\frac{2R^p}{c}\bbE\left[\frac{\varphi_p^2(g_{t,i})}{v_{t,i}+\epsilon}\right] \\
    & \geq
    \frac{c}{2}\frac{|\nabla_i\cL(\btheta)|^{p+1}}{\sqrt{\tilde{v}_{t,i}+\epsilon}}-\frac{2R^p}{c}\bbE\left[\frac{\varphi_p^2(g_{t,i})}{v_{t,i}+\epsilon}\right].
\end{align*}

Where last inequality comes from $R < 1 $. Using it for each dimension, we have:
\begin{align*}
\bbE_t\left[\cL(\btheta_{t+1})\right]\leq&\cL(\btheta_{t})-\frac{\eta c}{2}\frac{|\nabla_i\cL(\btheta_t)|^{p+1}}{\sqrt{\tilde{v}_{t,i}+\epsilon}}+\frac{2\eta R^p}{c}\bbE\left[\frac{\varphi_p^2(g_{t,i})}{v_{t,i}+\epsilon}\right]
+\sum_{i=1}^d\frac{\eta^2H}{2}\bbE_t\left[u_{t,i}^2\right]
\\=&
\cL(\btheta_{t})-\sum_{i=1}^d\frac{\eta c}{2}\frac{|\nabla_i\cL(\btheta_t)|^{p+1}}{\sqrt{\tilde{v}_{t,i}+\epsilon}}+\sum_{i=1}^d\left(\frac{2\eta R^p}{c}+\frac{\eta^2 H}{2}\right)\bbE_t\left[\frac{\varphi_p^2(g_{t,i})}{v_{t,i}+\epsilon}\right].
\end{align*}

Noticing $\sqrt{\tilde{v}_{t_i}+\epsilon}\leq R^p\sqrt{t}$, we further have:
\begin{align*}
\bbE_t\left[\cL(\btheta_{t+1})\right]\leq
\cL(\btheta_{t})-\frac{\eta c}{2}\frac{\|\nabla\cL(\btheta_t)\|_{p+1}^{p+1}}{R^p\sqrt{t}}+\sum_{i=1}^d\left(\frac{2\eta R^p}{c}+\frac{\eta^2 H}{2}\right)\bbE_t\left[\frac{\varphi_p^2(g_{t,i})}{v_{t,i}+\epsilon}\right].
\end{align*}

Summing the previous inequality for all $0\leq t\leq T-1$, taking the complete expectation, and using $\sqrt{t}\leq\sqrt{T}$, we have:
\begin{align*}
    \bbE\left[\cL(\btheta_{t})\right]
    \leq
    \cL(\btheta_{0})-\frac{\eta c\sum_{t=1}^T\|\nabla\cL(\btheta_t)\|_{p+1}^{p+1}}{2\eta R^p\sqrt{T}}+\sum_{i=1}^d\left(\frac{2R^p}{c}+\frac{\eta^2 H}{2}\right)\bbE\left[\sum_{t=1}^T\frac{\varphi_p^2(g_{t,i})}{v_{t,i}+\epsilon}\right].
\end{align*}

Then for each dimension, using Lemma~\ref{lemma: sum of the updates} for the sequence $\{(g_{t,i}^p)^2\}_{1\leq t\leq T}$, we obtain:
\begin{align*}
    &\bbE\left[\cL(\btheta_{t})\right]
    \\\leq&
    \cL(\btheta_{0})-\frac{\eta c \sum_{t=1}^T\bbE\|\nabla\cL(\btheta_t)\|_{p+1}^{p+1}}{2 R^p\sqrt{T}}+\left(\frac{2\eta R^p}{c}+\frac{\eta^2 H}{2}\right)d\ \bbE\left[\log\left(1+\frac{1}{\epsilon}\sum_{t=1}^T \varphi_p^2(g_{t,i})\right)\right]
    \\\leq&
    \cL(\btheta_{0})-\frac{\eta c\sum_{t=1}^T\bbE\|\nabla\cL(\btheta_t)\|_{p+1}^{p+1}}{2R^p\sqrt{T}}+\left(\frac{2\eta R^p}{c}+\frac{\eta^2 H}{2}\right)d\log\left(1+\frac{R^{2p}}{\epsilon}T\right).
\end{align*}
This implies:
\begin{align*}
    &\bbE \min_{1\leq t\leq T}\|\nabla\cL(\btheta_t)\|_{p+1}^{p+1} \leq \frac{1}{T}\sum_{t=1}^T\bbE\|\nabla\cL(\btheta_t)\|_{p+1}^{p+1}
    \\\leq&\frac{2R^p}{c \sqrt{T}}\left(\frac{\cL(\btheta_0)-\cL^\star}{\eta} +\left(\frac{2R^p}{c}+\frac{\eta H}{2}\right)d\log\left(1+\frac{R^{2p}}{\epsilon}T\right)\right)
    =\cO\left(\frac{\log T}{\sqrt{T}}\right).
\end{align*}
Hence
\begin{align*}
    \bbE\min_{1\leq t\leq T} \|\nabla\cL(\btheta_t)\|_{p+1}^{p+1} \leq \cO\left(\frac{\log T}{\sqrt{T}}\right).
\end{align*}

\end{proof}

\subsection{Proof of Theorem~\ref{thm: adagradpower, high-noise}}

With the help of the above Lemma~\ref{lemma: descent lemma} and~\ref{lemma: sum of the updates}, we can prove Theorem~\ref{thm: adagradpower, high-noise}.

\begin{proof}[Proof of Theorem~\ref{thm: adagradpower, high-noise}]\ \\
Due to the $H$-smoothness, we have the quadratic upper bound: 
\begin{align*}
    \cL(\btheta_{t+1})\leq\cL(\btheta_{t})-\eta\<\nabla\cL(\btheta_t),\bu_t\>+\frac{\eta^2H}{2}\left\|\bu_t\right\|_2^2.
\end{align*}

Taking the expectation at $t$, we have:
\begin{align*}
\bbE_t\left[\cL(\btheta_{t+1})\right]
\leq&\cL(\btheta_{t})-\eta\bbE_t\left[\<\nabla\cL(\btheta_t),\bu_t\>\right]+\frac{\eta^2H}{2}\bbE_t\left[\left\|\bu_t\right\|_2^2\right]
\\=&
\cL(\btheta_{t})-\eta\sum_{i=1}^d\bbE_t\left[\nabla_i\cL(\btheta_t)u_{t,i}\right]+\sum_{i=1}^d\frac{\eta^2H}{2}\bbE_t\left[u_{t,i}^2\right].
\end{align*}

Combine Lemma~\ref{lemma: descent lemma} with Assumption~\ref{ass: high-noise regime}, we get

\begin{align*}
    \bbE_{t}\left[\nabla_i\cL(\btheta) u_{t,i}\right]
    =\bbE_{t}\left[\frac{\nabla_i\cL(\btheta)\varphi_p(g_{t,i})}{\sqrt{v_{t,i}+\epsilon}}\right]\geq
    \frac{\sigma}{2}\frac{|\nabla_i\cL(\btheta)|^2}{\sqrt{\tilde{v}_{t,i}+\epsilon}}-\frac{2R^p}{\sigma}\bbE\left[\frac{\varphi_p^2(g_{t,i})}{v_{t,i}+\epsilon}\right].
\end{align*}

Using it for each dimension, we have:
\begin{align*}
\bbE_t\left[\cL(\btheta_{t+1})\right]\leq&\cL(\btheta_{t})-\frac{\eta\sigma}{2}\frac{|\nabla_i\cL(\btheta_t)|^2}{\sqrt{\tilde{v}_{t,i}+\epsilon}}+\frac{2\eta R^p}{\sigma}\bbE\left[\frac{\varphi_p^2(g_{t,i})}{v_{t,i}+\epsilon}\right]
+\sum_{i=1}^d\frac{\eta^2H}{2}\bbE_t\left[u_{t,i}^2\right]
\\=&
\cL(\btheta_{t})-\sum_{i=1}^d\frac{\eta\sigma}{2}\frac{|\nabla_i\cL(\btheta_t)|^2}{\sqrt{\tilde{v}_{t,i}+\epsilon}}+\sum_{i=1}^d\left(\frac{2\eta R^p}{\sigma}+\frac{\eta^2 H}{2}\right)\bbE_t\left[\frac{\varphi_p^2(g_{t,i})}{v_{t,i}+\epsilon}\right].
\end{align*}

Noticing $\sqrt{\tilde{v}_{t_i}+\epsilon}\leq R^p\sqrt{t}$, we further have:
\begin{align*}
\bbE_t\left[\cL(\btheta_{t+1})\right]\leq
\cL(\btheta_{t})-\frac{\eta \sigma}{2}\frac{\|\nabla\cL(\btheta_t)\|_2^2}{R^p\sqrt{t}}+\sum_{i=1}^d\left(\frac{2\eta R^p}{\sigma}+\frac{\eta^2 H}{2}\right)\bbE_t\left[\frac{\varphi_p^2(g_{t,i})}{v_{t,i}+\epsilon}\right].
\end{align*}

Summing the previous inequality for all $0\leq t\leq T-1$, taking the complete expectation, and using $\sqrt{t}\leq\sqrt{T}$, we have:
\begin{align*}
    \bbE\left[\cL(\btheta_{t})\right]
    \leq
    \cL(\btheta_{0})-\frac{\eta \sigma\sum_{t=1}^T\|\nabla\cL(\btheta_t)\|_2^2}{2\eta R^p\sqrt{T}}+\sum_{i=1}^d\left(\frac{2R^p}{\sigma}+\frac{\eta^2 H}{2}\right)\bbE\left[\sum_{t=1}^T\frac{\varphi_p^2(g_{t,i})}{v_{t,i}+\epsilon}\right].
\end{align*}

Then for each dimension, using Lemma~\ref{lemma: sum of the updates} for the sequence $\{(g_{t,i}^p)^2\}_{1\leq t\leq T}$, we obtain:
\begin{align*}
    &\bbE\left[\cL(\btheta_{t})\right]
    \\\leq&
    \cL(\btheta_{0})-\frac{\eta \sigma\sum_{t=1}^T\bbE\|\nabla\cL(\btheta_t)\|_2^2}{2 R^p\sqrt{T}}+\left(\frac{2\eta R^p}{\sigma}+\frac{\eta^2 H}{2}\right)d\ \bbE\left[\log\left(1+\frac{1}{\epsilon}\sum_{t=1}^T \varphi_p^2(g_{t,i})\right)\right]
    \\\leq&
    \cL(\btheta_{0})-\frac{\eta\sigma\sum_{t=1}^T\bbE\|\nabla\cL(\btheta_t)\|_2^2}{2R^p\sqrt{T}}+\left(\frac{2\eta R^p}{\sigma}+\frac{\eta^2 H}{2}\right)d\log\left(1+\frac{R^{2p}}{\epsilon}T\right).
\end{align*}
This implies:
\begin{align*}
    &\bbE\min_{1\leq t\leq T}\|\nabla\cL(\btheta_t)\|_2^2\leq \frac{1}{T}\sum_{t=1}^T\bbE\|\nabla\cL(\btheta_t)\|_2^2
    \\\leq&\frac{2R^p}{\sigma\sqrt{T}}\left(\frac{\cL(\btheta_0)-\cL^\star}{\eta} +\left(\frac{2R^p}{\sigma}+\frac{\eta H}{2}\right)d\log\left(1+\frac{R^{2p}}{\epsilon}T\right)\right) \\
    \leq& \frac{R^{p-1}}{\sigma}\frac{2R}{\sqrt{T}} \left( \frac{\cL(\btheta_0)-\cL^\star}{\eta}+ \left(2R +\frac{\eta H}{2}\right)d\log\left(1+\frac{R^2}{\epsilon} T\right)\right) \\
    = &  \frac{R^{p-1}}{\sigma}\Big({\rm R.H.S.} \text{ of~\eqref{equ: adagrad rate}} \Big).
\end{align*}

The last inequality comes from Assumption~\ref{ass: high-noise regime} and $R < 1$.

\end{proof}

\end{document}